\newcommand{\pan}[1]{}
\newcommand{\algname}{\text{R}^2\text{CSL}}
\crefname{assumption}{assumption}{assumptions}
\title{How to Provably Improve Return Conditioned Supervised Learning?}
\author{%
    Zhishuai Liu$^1$, Yu Yang$^1$, Ruhan Wang$^2$, Pan Xu$^1$, Dongruo Zhou$^2$\\
    $^1$Duke University\\
    $^2$Indiana University Bloomington\\
    \texttt{\{zhishuai.liu,yu.yang,pan.xu\}@duke.edu,
    \{ruhwang, dz13\}@iu.edu}
}
\begin{document}

\maketitle

\begin{abstract}
In sequential decision-making problems, Return-Conditioned Supervised Learning (RCSL) has gained increasing recognition for its simplicity and stability in modern decision-making tasks. Unlike traditional offline reinforcement learning (RL) algorithms, RCSL frames policy learning  as a supervised learning problem by taking both the state and return as input. This approach eliminates the instability often associated with temporal difference (TD) learning in offline RL. However, RCSL has been criticized for lacking the stitching property, meaning its performance is inherently limited by the quality of the policy used to generate the offline dataset. To address this limitation, we propose a principled and simple framework called Reinforced RCSL. The key innovation of our framework is the introduction of a concept we call the  in-distribution optimal return-to-go. This mechanism leverages our policy to identify the best achievable in-dataset future return based on the current state, avoiding the need for complex return augmentation techniques. Our theoretical analysis demonstrates that Reinforced RCSL can consistently outperform the standard RCSL approach. Empirical results further validate our claims, showing significant performance improvements across a range of benchmarks.
\end{abstract}

\section{Introduction}

Reinforcement Learning (RL) has emerged as a powerful paradigm for decision-making and sequential learning \citep{sutton2018reinforcement}, achieving remarkable successes across domains such as robotics \citep{kober2013reinforcement, singh2022reinforcement}, healthcare \citep{liu2017deep, yu2021reinforcement, liu2023deep}, games \citep{mnih2015human,vinyals2019grandmaster}, and training large language models \citep{guo2025deepseek}. Among the various RL paradigms, offline RL \citep{levine2020offline, jin2021pessimism, fujimoto2021minimalist} has gained substantial attention due to its ability to learn policies from pre-collected datasets without requiring interaction with the environment, which is particularly appealing in scenarios where exploration is costly, unsafe, or impractical. A key advantage of offline RL lies in its ability to leverage data generated by many existing policies, enabling the discovery of robust and effective behavior patterns.

Within offline RL, return-conditioned supervised learning (RCSL) has recently attracted significant traction \citep{kostrikov2022offline, kumar2019reward, chen2021decision, emmons2022rvs, wang2024return}. RCSL reframes the policy learning problem as a supervised learning problem: the input consists of the state and the return from the current state, while the output is the optimal action for that state. Compared to classical offline RL algorithms, which primarily rely on dynamic programming (DP) approaches \citep{kumar2020conservative, kostrikov2022offline}, RCSL is easier to train, more straightforward to tune, and often achieves competitive performance across a variety of tasks. Notable RCSL methods include Decision Transformer (DT) \citep{chen2021decision} and Reinforcement Learning via Supervised Learning (RVS) \citep{emmons2022rvs}. However, a critical limitation of RCSL lies in its lack of stitching ability—that is, its inability to derive a policy that exceeds the performance of the policies used to generate the offline dataset. This limitation arises because RCSL tends to follow trajectories from the dataset without effectively combining the best parts of different trajectories, which is essential for achieving superior performance. This limitation has been rigorously analyzed and demonstrated in \cite{brandfonbrener2022does}, suggesting that RCSL's lack of stitching ability may be an inherent property of the algorithm design.

Recently, several works have taken initial steps toward addressing this limitation and exploring potential solutions. For example, \citet{yamagata2023q} proposed the QDT method, which relabels returns using a pre-learned optimal Q-function; \citet{wu2024elastic} introduced the Elastic Decision Transformer, which dynamically adjusts the input sequence length; and \citet{zhuang2024reinformer} developed the Reinformer, which incorporates expectile regression into the Decision Transformer framework. These works have empirically demonstrated some level of stitching ability, indicating that enhancing stitching ability within RCSL methods is indeed possible. However, the theoretical understanding of RCSL remains underdeveloped compared to dynamic programming-based RL methods, where theoretical guarantees are more mature and extensively studied. Bridging this gap between empirical success and theoretical guarantees remains an open challenge.
In this context, we pose the following critical question:
\begin{center} 
\textbf{Can we improve return-conditioned supervised learning to have provable stitching ability?} 
\end{center}

Our main contributions are listed as follows. 
\begin{itemize}[leftmargin = *]%
    \item We introduce \textbf{reinforced RCSL ($\algname$)}, an advancement over RCSL that strictly improves its performance. The key innovation of $\algname$ is a new concept called the \emph{in-distribution optimal return-to-go (RTG)}, which characterizes the highest accumulated reward a RCSL method can achieve under the offline trajectory distribution. This quantity can be directly learned via supervised learning, thereby avoiding the need for dynamic programming, which is commonly used in classical offline RL methods \citep{kumar2020conservative}. We demonstrate that $\algname$, by incorporating the in-distribution optimal RTG, learns an \emph{in-distribution optimal stitched policy} that surpasses the best policy achievable by traditional RCSL methods. To the best of our knowledge, this is the first work to \textbf{provably surpass RCSL without dynamic programming}.
    
    \item We provide a \textbf{sample complexity analysis} for various environments, including tabular MDPs and MDPs with general function approximation. We show that the sample complexity of $\algname$ to achieve the in-distribution optimal stitched policy is of the same order as classical RCSL methods \citep{brandfonbrener2022does}, while $\algname$ converges to a superior policy. We further propose two realizations of $\algname$ using \emph{expectile regression} \citep{newey1987asymmetric} and \emph{quantile regression} \citep{koenker2001quantile} for the in-distribution optimal RTG estimation and analyze their theoretical guarantees.

    \item We conduct comprehensive experimental studies  under a simulated point mass environment, the D4RL gym and Antmaze environments to showcase the effectiveness of the $\algname$ algorithm. Experiment results demonstrate that (1) $\algname$ achieves the stitching ability, and outperforms RCSL-type algorithms like RvS and DT; (2) The $\algname$ framework is also flexible enough to incorporate dynamic programming components, and it achieves performance comparable to the state-of-the-art QT algorithm \citep{hu2024q}, while maintaining its simplicity.
    
    \item To further extend the notion of in-distribution optimal RTG, we study the \emph{multi-step in-distribution optimal RTG}, which generalizes the original quantity. We prove that by increasing the number of steps considered in the in-distribution optimal RTG, $\algname$ is capable of finding the \emph{optimal in-distribution policy}, a result that was previously only achievable by dynamic programming-based algorithms \citep{kumar2020conservative}. Our findings \textbf{close a long-standing theoretical gap} between RCSL-type methods and dynamic programming-based approaches.

\end{itemize}

\section{Related Work}
\paragraph{Empirical studies about RCSL.} 
Conditional sequence modeling \citep{srivastava2019training, janner2021offline,schmidhuber2019reinforcement} has emerged as a promising approach to solving offline reinforcement learning through supervised learning. This paradigm learns from the offline dataset a behavior policy with state and return-to-go (RTG) as input, and then predicts subsequent actions by conditioning on the current state and a conditioning function that encodes specific metrics for future trajectories. Existing methods within the RCSL framework have demonstrated significant empirical success. In particular, DT \citep{chen2021decision, furuta2022generalized, zheng2022online} and RvS \citep{emmons2022rvs}
adopt the vanilla RTG as the conditioning function to predict the optimal action. 
However, vanilla RCSL methods, such as DT and RvS, lack stitching ability \citep{brandfonbrener2022does}, a crucial property of dynamic programming-based methods like Q-learning \citep{watkins1992q} and TD \citep{sutton2018reinforcement, tesauro1995temporal}. To address this limitation, \citet{yamagata2023q} proposed QDT, which enhances RCSL by relabeling RTGs in the dataset using a pre-trained optimal Q-function and then training a DT on the relabeled data. \citet{wu2024elastic} introduced the Elastic Decision Transformer,  which enables trajectory stitching during action inference by adjusting the history length used in DT to discard irrelevant or suboptimal past experiences. 
\citet{zhuang2024reinformer} proposed Reinformer, which incorporates an expectile regression model to estimate the in-distribution optimal RTG as the conditioning function. 
\citet{yangdichotomy,paster2022you} studied the limitations of RCSL in stochastic environments. \citet{gao2024act,xu2022policy} propose DT-based and offline RL methods that utilize expectile regression
Despite the empirical successes of these methods, none of them provide any theoretical guarantees on the stitching ability.

\paragraph{Theoretical studies of RCSL.}
Compared to the extensive theoretical studies on dynamic programming (DP)-based algorithms, the theoretical analysis of RCSL methods remains relatively limited. From a theoretical perspective, \citet{brandfonbrener2022does} examined the finite-sample guarantees of RCSL methods, including DT and RvS, and identified the fundamental challenge of sample complexity due to the need for sufficient return and state coverage. \citet{zheng2024how} investigated the goal-conditioned supervised learning (GCSL) setting and provided a regret analysis for a goal relabeling method. \citet{zhu2024provably} also studied the GCSL setting, offering a finite-sample analysis for GCSL with $f$-divergence regularization. Our work falls within this line of research on RCSL, but we \emph{rigorously establish} the sample complexity of $\algname$, demonstrating that it achieves a policy superior to that of a classical RCSL method.

\section{Problem Formulation}
\label{sec: Preliminary}

\paragraph{Reinforcement learning.} 

We consider episodic Markov Decision Processes (MDPs) in this work, where each MDP is represented by a tuple $(\mathcal{S}, \mathcal{A}, P, r, H, \rho)$. Here, $\mathcal{S}$ denotes the state space, $\mathcal{A}$ is the action space, $P(s' | s, a)$ specifies the probability of transitioning to state $s'$ after taking action $a$ at state $s$, and the reward function $|r_h(s, a)| \leq 1$ assigns a reward to taking action $a$ at state $s$. The horizon length $H$ indicates the fixed number of steps in each episode, and $\rho(s)$ defines the initial state distribution, giving the probability of starting with state $s$.

The agent interacts with the environment with a policy $\pi:\cS\times [H]\rightarrow \Delta(\cA)$ in a policy class $\Pi$. At each timestep $h$, the agent observes the current state $s_h \in \mathcal{S}$, selects an action $a_h \in \mathcal{A}$ according to its policy $\pi_h$, and transitions to the next state $s_{h+1}$, which is sampled from the transition dynamics $P(\cdot | s_h, a_h)$. Simultaneously, the agent receives a reward $r_h = r_h(s_h, a_h)\in[0,1]$. The objective of the agent is to learn a policy $\pi: \mathcal{S} \to \mathcal{A}$ that maximizes the expected cumulative reward over an episode: 
$
\mathbb{E}_{\pi}^P \big[ \sum_{h=1}^H r(s_h, a_h)\big],
$
where the expectation is taken over the randomness in the initial state distribution $\rho$, the policy $\pi$, and the transition dynamics $P$. The value function $V_h^\pi(s) = \mathbb{E}_\pi \big[ \sum_{t=h}^H r_t(s_t, a_t) \mid s_h = s \big]$ represents the expected cumulative reward starting from state $s$ at timestep $h$ and following policy $\pi$ thereafter. 
The Q-function is given by:
$
Q_h^\pi(s, a) = r_h(s, a) + \mathbb{E}_{s' \sim P_h(\cdot \mid s, a)} \big[ V_{h+1}^\pi(s') \big].
$
The optimal value function $V_h^*(s)$ and the optimal Q-function $Q_h^*(s, a)$ are defined similarly but correspond to the optimal policy $\pi^*$, which maximizes the expected cumulative reward. Finally, we define $J(\pi)=\EE_{s_1 \sim \rho}V_1^\pi(s_1)$.

\paragraph{The offline setting.} 
We consider an offline dataset $\cD$ collected by a behavior policy $\beta$, where the dataset size is $|\cD| = N$. $\cD$ consists of trajectories: $\cD = \{\tau^k \}_{k=1}^N$, with each trajectory $\tau^k$ represented as 
$\tau^k = (s_1^k, a_1^k, r_1^k, \cdots, s_H^k, a_H^k, r_H^k),$
where $s_1^k \sim \rho$ is the initial state drawn from the initial state distribution $\rho$, $a_h^k \sim \beta(\cdot|s_h^k)$ is the action selected by the behavior policy $\beta$ at step $h$, and $s_{h+1}^k \sim P(\cdot | s_h^k, a_h^k)$ is the next state sampled from the transition dynamics $P$. We use $d^{\beta}_h(s)$ to denote the state distribution of state $s$ under the behavior policy $\beta$ at step $h$. Our goal is to leverage this offline dataset $\cD$ to learn an effective policy that performs well in the underlying environment.

\paragraph{The RCSL framework.}%

The RCSL framework aims to learn a policy by modeling the distribution of actions conditioned on the state, the stage and the return of the trajectory, denoted by $\pi:\cS \times[H]\times \RR\rightarrow \Delta(\cA)$. Specifically, RCSL algorithms optimize the policy by minimizing the empirical negative log-likelihood loss over the offline dataset $\cD$: $\hat\pi = \argmin_{\pi \in \Pi}\hat{L}(\pi)$, where
\begin{align}
    \textstyle \hat{L}(\pi) = -\sum_{\tau \in \cD} \sum_{t=1}^H \log \pi(a_t \mid s_t,t, g(\tau,t)),\label{def:mle}
\end{align}
where $g(\tau, h) = \sum_{t=h}^H r_t$ computes the  return-to-go (RTG, \citep{chen2021decision}) along $\tau$, starting from step $h$. This optimization aligns the learned policy $\pi$ with the observed behavior in the offline dataset, incorporating both the states and return-based context.
At test time, the RCSL algorithms utilize the learned policy $\hat{\pi}$ along with a test-time conditioning function $f(s,h)$, which determines the desired return  to set the condition of the policy. The resulting test-time policy $\pi_f$ is then defined as $\pi_f(a | s,h) := \hat \pi(a | s, f(s,h))$, where $\pi_f$ produces actions conditioned on the current state $s$ and the test-time return determined by $f(s,h)$. This design enables flexible adaptation of the policy to different test-time objectives by modifying the conditioning function $f(s,h)$.

\section{The Reinforced RCSL}

In this section, we introduce our algorithm, $\algname$, discussing its core intuition and the reasons it should be preferred. To provide context, we first revisit existing analyses of the failure modes encountered by classical RCSL, highlighting the limitations that motivate the design of $\algname$.

\paragraph{Why does RCSL fail to stitch?}
\citet{brandfonbrener2022does} showed that the policy returned by the 
\begin{wrapfigure}{r}{0.35\textwidth}
    \centering
    \includegraphics[width=\linewidth]{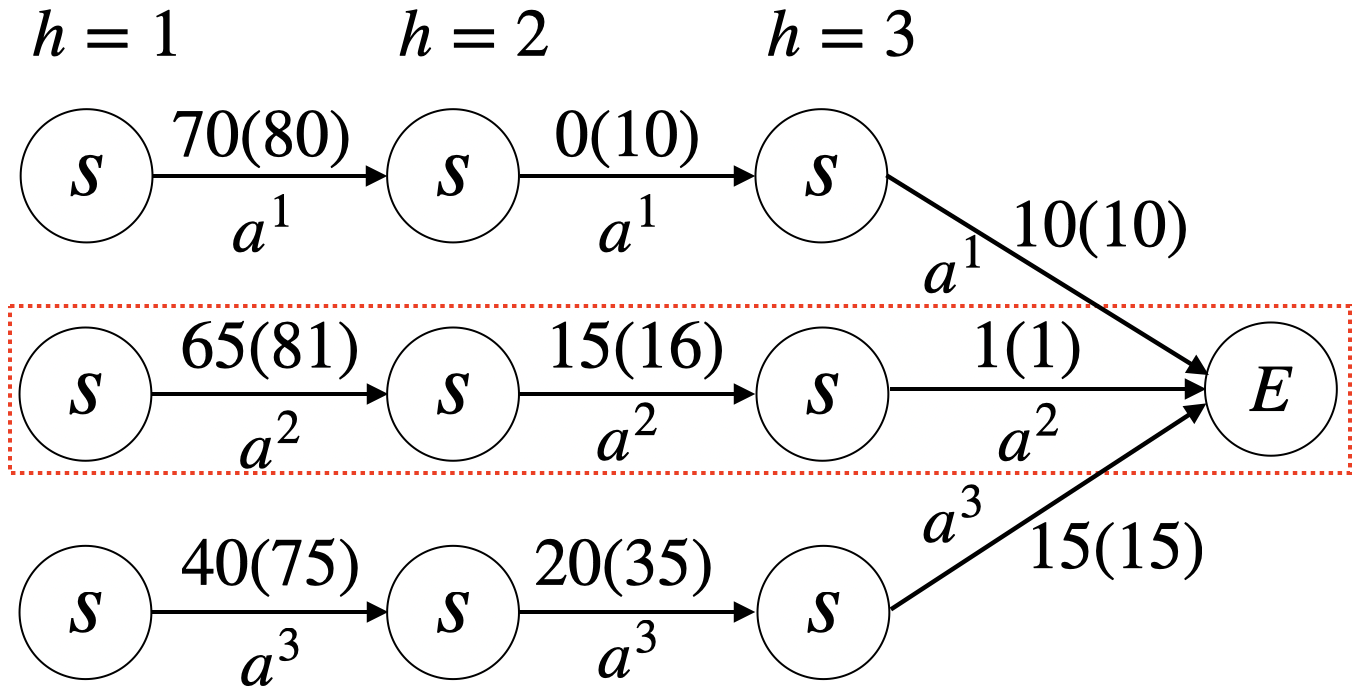}
    \caption{An example when RCSL fails to stitch.}
    \label{fig: fail-to-stitch}
\end{wrapfigure}
default RCSL framework is unable to outperform the behavior policy used to generate the dataset $\cD$. To illustrate this, given a pretrained RCSL policy $\hat{\pi}$, classical RCSL approaches such as Decision Transformer (DT) \citep{chen2021decision} and Return-Conditioned Supervision (RVS) \citep{emmons2022rvs} rely on a conditioning return function $f$ that satisfies the following conditions \citep{brandfonbrener2022does}:
\begin{itemize}[leftmargin=*]
    \item \textbf{In-distribution condition:} The initial return $f(s_1, 1)$ must be achievable with non-zero probability under the behavior policy $\beta$. 
    \item \textbf{Consistency condition:} For any trajectory, the return function must satisfy $f(s_h, h) = f(s_{h+1}, h+1) + r_h(s_h, a_h)$, ensuring consistency across steps.
\end{itemize}
These conditions are crucial for 
ensuring that, at each step $h$, there is no out-of-distribution (OOD) issue with the inputs to the policy $\pi_f$. While these conditions guarantee the validity of the learned policy, they significantly restrict the range of feasible conditioning functions $f$. Specifically, this implies that the value domain of $f$ can only be selected as $g(\tau, h)$, where $\tau$ represents any trajectory that could appear in the dataset $\cD$ generated under the behavior policy $\beta$. Therefore, the return of trajectories generated by $\pi_f$ will also be upper bounded by $g(\tau,1)$, which performs no better than the best trajectory in the dataset $\cD$. 
For illustration, we provide a toy example in \Cref{fig: fail-to-stitch} to show RCSL fails to stitch. In this example, $\cA=\{a^1,a^2,a^3\}$, $H=3$ and $\cS = \{s\}$. The state is unique and remains unchanged across stages. Each row represents a trajectory. The number outside (inside) the parentheses are rewards (return-to-go). For RCSL, we can only simply choose $f$ to be 80, 81 or 75 as the conditioning return at the initial stage, and the `optimal' RCSL policy would choose $a^2$ at each stages, i.e., the trajectory in the red box. However, a better trajectory can be obtained by stitching, e.g. taking the action sequence $(a^2, a^3, a^3)$.

\subsection{Algorithm Description}\label{sec:basicalg}
To address the non-stitching issue discussed above, we introduce $\algname$.  $\algname$ discards the consistency condition, thereby allowing greater flexibility in the selection of the conditioning function $f$. To achieve the stitching ability, at each time step $h$, $\algname$ looks ahead to search for the in-distribution optimal RTG as its conditioning function, instead of following the original return-to-go in trajectory $\tau$.
This increased flexibility and `optimal conditioning' enable $\algname$ to achieve superior results by overcoming the limitations of classical RCSL approaches. 

Formally, we begin by introducing the \emph{feasible set of trajectories} under the behavior policy $\beta$, 
$T_{\beta} := \{\tau \mid P_{\beta}(\tau) > 0\}$,
where $P_{\beta}$ is the trajectory distribution induced by $\beta$ under the transition dynamics $P$. Based on this, we can define the {\it feasible set of conditioning functions}:  $\forall \tau \in T_{\beta}$ and  $(s_h, g_h) \in \tau$, there must exist a conditioning function $f$ such that $f(s_h, h) = g_h$. 
The feasible conditioning function set $\cF_{\beta}$ is defined as:
\begin{align*}
    \cF_{\beta} &:= \{f : \cS \times [H] \rightarrow [0,H] \mid \forall (s, h) \in \text{dom}(f),  \exists \, \tau \in T_{\beta} \, \text{s.t.} \, s_h = s \, \text{and} \, f(s, h) = g_h \},
\end{align*}
where $\text{dom}(f)$ is the domain of $f$. Notably, the conditioning functions in $\cF_{\beta}$ are not constrained by the consistency assumption. At any stage $h \in [H]$, {\it the feasible set of states} is defined as $\cS_h^{\beta} := \{s \in \cS \mid d_h^{\beta}(s) > 0\}$. For any feasible state $s \in \cS_h^{\beta}$ at stage $h$, the {\it local feasible conditioning function set} is defined as $\cF_{\beta}(s, h) := \{f : \cS \times [H] \rightarrow [0,H]  \mid f \in \cF_{\beta} \, \text{and} \, (s, h) \in \text{dom}(f)\}$.

\paragraph{In-distribution optimal stitched policy.} 
Building on the feasible set $\cF_{\beta}$, we define the {\it in-distribution optimal RTG} $f^{\star}$ as follows,
\begin{align}
     f^{\star}(s, h) := \argmax_{f \in \cF_{\beta}(s, h)} f(s, h),
\end{align}
which represents the in-distribution optimal return-to-go starting from $(s, h)$ in the offline dataset. Based on this, we define the in-distribution optimal stitched policy $\pi_{\beta}^{\star}$ as $ \pi_{\beta}^{\star}(a \mid s, h) := P_{\beta}(a \mid s, h, f^{\star}(s, h))$,
where the policy is conditioned on the optimal return-to-go $f^{\star}(s, h)$. As an extension, we define a broader class of return-conditioned policies $ \pi_f(a \mid s, h) := P_{\beta}(a \mid s, h, f(s, h)),  \forall f \in \cF_{\beta}$.
We denote the subset of conditioning functions that satisfy the consistency condition as $\cF^{\text{Cst}}_{\beta} \subset \cF_{\beta}$. Using this, we present our first theorem, which highlights the superiority of the optimal stitched policy.

\begin{theorem} 
\label{thm:superiority of the optimakl stitched policy}
For any $f \in \cF^{\text{Cst}}_{\beta}$, we have $J(\pi^{\star}_{\beta}) \geq J(\pi_f)$ for all $f \in \cF^{\text{Cst}}_{\beta}$.
In a word, the value achieved by the optimal stitched policy $\pi^{\star}_{\beta}$, equipped with  $f^{\star} \in \cF_{\beta}$, is always at least as good as that of policies constrained by the classical RCSL consistency condition. 
\end{theorem}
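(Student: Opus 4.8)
The plan is to reduce the value comparison to a pointwise comparison between conditioning functions, bridged by two value identities. The central observation is that a consistency-constrained conditioning function is \emph{self-verifying}: when we deploy $\pi_f$ with $f\in\cF_\beta^{\text{Cst}}$, the relation $f(s_h,h)=f(s_{h+1},h+1)+r_h(s_h,a_h)$ forces the realized return to match the conditioning target. Concretely, I would first prove by backward induction on $h$ that $V_h^{\pi_f}(s)=f(s,h)$ for every feasible $(s,h)$. At the base case $h=H$, consistency gives $f(s,H)=r_H(s,a)$ along any feasible trajectory, so conditioning $\pi_f(\cdot\mid s,H)=P_\beta(\cdot\mid s,H,f(s,H))$ restricts support to actions with $r_H(s,a)=f(s,H)$, whence $V_H^{\pi_f}(s)=f(s,H)$. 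For the inductive step, every $a$ in the support of $\pi_f(\cdot\mid s,h)$ lies on a feasible trajectory through $(s,h)$, and consistency yields $f(s',h+1)=f(s,h)-r_h(s,a)$ for its successor $s'$; combining with the hypothesis $V_{h+1}^{\pi_f}(s')=f(s',h+1)$ gives $r_h(s,a)+V_{h+1}^{\pi_f}(s')=f(s,h)$ for each such $a$, hence $V_h^{\pi_f}(s)=f(s,h)$. Taking $h=1$ and averaging over $\rho$ gives $J(\pi_f)=\EE_{s_1\sim\rho}[f(s_1,1)]$.

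Second, I would show the stitched policy dominates its own target, $V_h^{\pi_\beta^\star}(s)\ge f^\star(s,h)$ for all feasible $(s,h)$, again by backward induction. Every action in the support of $\pi_\beta^\star(\cdot\mid s,h)=P_\beta(\cdot\mid s,h,f^\star(s,h))$ appears on a feasible trajectory realizing the optimal in-distribution return $f^\star(s,h)$ from $(s,h)$; its successor $s'$ therefore admits a feasible continuation of return $f^\star(s,h)-r_h(s,a)$, so by maximality in the definition of $f^\star$ we obtain $f^\star(s',h+1)\ge f^\star(s,h)-r_h(s,a)$. Chaining with the hypothesis $V_{h+1}^{\pi_\beta^\star}(s')\ge f^\star(s',h+1)$ gives $r_h(s,a)+V_{h+1}^{\pi_\beta^\star}(s')\ge f^\star(s,h)$ for every supported $a$, and taking expectations yields $V_h^{\pi_\beta^\star}(s)\ge f^\star(s,h)$. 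This is exactly where stitching enters: the inequality $f^\star(s',h+1)\ge f^\star(s,h)-r_h(s,a)$ can be strict because $\pi_\beta^\star$ re-optimizes the remaining return at $s'$ instead of committing to the original trajectory.

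Finally I would combine the two pieces with the pointwise dominance of $f^\star$. Since any $f\in\cF_\beta^{\text{Cst}}\subset\cF_\beta$ has $f\in\cF_\beta(s_1,1)$ for feasible initial $s_1$, the definition $f^\star(s,h)=\argmax_{f\in\cF_\beta(s,h)}f(s,h)$ gives $f(s_1,1)\le f^\star(s_1,1)$. Assembling the chain, $J(\pi_f)=\EE_{s_1}[f(s_1,1)]\le\EE_{s_1}[f^\star(s_1,1)]\le\EE_{s_1}[V_1^{\pi_\beta^\star}(s_1)]=J(\pi_\beta^\star)$, which is the claim.

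I expect the main obstacle to be the two value identities rather than the final chaining, and specifically the step converting conditioning on a return target into a statement about the value realized under the true dynamics $P$. The policy $\pi_f$ matches only the behavior policy's action marginal $P_\beta(a\mid s,h,g)$, whereas conditioning on the event ``$\mathrm{RTG}=g$'' also couples the successor state; equating $\EE_{s'\sim P(\cdot\mid s,a)}[\cdot]$ with the value along the conditioned trajectory requires the transitions to be (near-)deterministic, so I would either invoke the determinism assumption appropriate for this regime or restrict the induction to the deterministic-successor setting, as in the coverage/consistency analysis of \citet{brandfonbrener2022does}. The feasibility bookkeeping—ensuring every successor state visited by $\pi_\beta^\star$ and $\pi_f$ remains in $\cS_{h+1}^\beta$ so that $f^\star$ and $f$ are defined there—is then routine, since determinism pins each successor onto the witnessing feasible trajectory.
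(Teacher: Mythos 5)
Your proof is correct, and its skeleton matches the paper's: both arguments reduce the claim to the identity $J(\pi_f)=\EE_{s_1\sim\rho}[f(s_1,1)]$ for consistent $f$, then show the stitched policy's value dominates the best achievable in-distribution initial return. The differences are in execution, and yours is the more rigorous rendering. The paper simply cites Corollary~2 of \citet{brandfonbrener2022does} for the first identity, whereas you re-derive it by backward induction; more importantly, for the dominance step the paper tracks a single greedily-stitched trajectory and concludes with an informal ``so on and so forth, it is trivial that\dots'' argument, while you replace this with the clean pointwise lemma $V_h^{\pi_\beta^\star}(s)\geq f^\star(s,h)$ proved by backward induction, using maximality of $f^\star$ to get $f^\star(s',h+1)\geq f^\star(s,h)-r_h(s,a)$ at every supported action. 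This handles properly the fact that $\pi_\beta^\star(\cdot\mid s,h)=P_\beta(\cdot\mid s,h,f^\star(s,h))$ is in general a stochastic policy (a distribution over stitched continuations, not one trajectory), which the paper's trajectory-following narrative glosses over. You also make explicit the determinism requirement needed to convert return-conditioning into statements about values under the true dynamics; the paper's theorem statement is silent on this, but its proof inherits exactly the same requirement through the cited corollary, so your caveat is a feature, not a deviation. One minor bookkeeping point you could make explicit: the in-distribution condition on $f\in\cF_\beta^{\text{Cst}}$ guarantees $(s_1,1)\in\mathrm{dom}(f)$ for every $s_1$ with $\rho(s_1)>0$, so the pointwise comparison $f(s_1,1)\leq f^\star(s_1,1)$ is licensed everywhere the outer expectation looks.
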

\Cref{thm:superiority of the optimakl stitched policy} shows that conditioning function-in-distribution optimal RTG $f^{\star}$ enables stitching with a better trajectory at each time step. However, in practice, the in-distribution optimal RTG $f^{\star}$ is unknown, and consequently, so is the in-distribution optimal stitched policy $\pi^{\star}_{\beta}$.

We propose our algorithm $\algname$ in \Cref{alg-RCSL: deterministic env} 
to construct the in-distribution optimal RTG 
\begin{wrapfigure}{r}{0.54\textwidth}
\centering
\vspace{-15pt}
    \begin{minipage}[t]{\linewidth}
      \begin{algorithm}[H]
\caption{The Reinforced RCSL  ($\algname$)\label{alg-RCSL: deterministic env}}
    \begin{algorithmic}[1]
        \REQUIRE The offline dataset $\cD$.
        \STATE Set $\hat{\pi} = \argmin_{\pi\in\Pi} \hat{L}(\pi)$ following \eqref{def:mle}
        \label{algline: policy estimation}
        \STATE Obtain the in-distribution optimal RTG function estimation, $\hat{f}^{\star}(s,h)$. %
        \label{algline:conditioning function}
        \STATE Receive the initial state $s_1$.
        \FOR{$h=1, \cdots, H$}
            \STATE Establish $\hat{\pi}_{\cD}^{\star}(\cdot|s_h,h)=\hat{\pi}(\cdot|s_h,h,\hat{f}^{\star}(s_h,h))$.
            \STATE Implement $a_h\sim \hat{\pi}_{\cD}^{\star}(\cdot|s_h,h)$ and receive the next state $s_{h+1}$.
        \ENDFOR
    \end{algorithmic}
\end{algorithm}
    \end{minipage}
  \end{wrapfigure}
estimation and estimate the in-distribution optimal stitched policy.  
During training, $\algname$ first follows the RCSL framework by minimizing the empirical negative log-likelihood loss defined in \eqref{def:mle}. 
Then it additionally estimates the in-distribution optimal RTG function (Line \ref{algline:conditioning function}). Here we do not specify a particular $\hat{f}$ estimation procedure, which will be instantiated under specific settings (see examples for the tabular setting in \Cref{subsec:tabular} and for the general function approximation setting in \Cref{subsec:function approximation}). %
During inference, $\algname$ uses the in-distribution optimal RTG estimation as the condition  to 
construct the in-distribution optimal stitched policy estimation $\hat{\pi}_{\cD}^{\star}$.

\section{Finite-Sample Analysis of $\algname$}
\label{sec:case-1 Uniformly Well-Covered Offline Dataset.}

\subsection{Warm-up Analysis for Deterministic Environments}
\label{subsec:tabular}
In this section, we study $\algname$ realization under different environment setups, and provide finite-sample guarantees for variants of $\algname$. 
We start with a {\it deterministic} environment, under which we instantiate and analyze $\algname$  to provide clearer insights into its behavior and advantages. Here, we have a deterministic transition $P$ and deterministic rewards $r_h$, while assuming finite state and action spaces. The initial state distribution $\rho$ and the behavior policy $\beta$ remain stochastic. For notational simplicity, we redefine a trajectory as $\tau = (s_1, a_1, g_1, s_2, a_2, g_2, \dots, s_H, a_H, g_H)$, where $g_h$ represents the RTG at stage $h$.

Given the current state $s_h$ at stage $h$, we define $T_{\cD}(s_h) = \{k \in [N] \mid s_h^{k} = s_h\}$, representing the set of trajectories in the empirical dataset whose state at stage $h$ is $s_h$.
We set the estimation of the in-distribution optimal RTG in \Cref{alg-RCSL: deterministic env} to be $\hat{f}^{\star}(s_h, h) = \argmax_{k \in T_{\cD}(s_h)} g_h^k$. This assigns the \emph{empirical in-distribution optimal RTG} from the dataset to the conditioning function. 
It subsequently determines the \emph{empirical in-distribution optimal stitched policy}: $\hat{\pi}_{\cD}^{\star}(\cdot | s_h, h) = \hat{\pi}(\cdot | s_h, h, \hat{f}^{\star}(s_h, h))$.
Thus, Algorithm \ref{alg-RCSL: deterministic env} effectively follows the steps outlined in Section \ref{sec:basicalg} to learn the in-distribution optimal stitched policy $\pi_\beta^*$ using an empirical dataset $\cD$ instead of the behavior policy itself.

\noindent\textbf{Theoretical guarantee.}
The policy learned by \Cref{alg-RCSL: deterministic env}, denoted as $\hat{\pi}_{\cD}^{\star}$, is an estimate of $\pi^{\star}_{\beta}$. We now analyze its finite-sample theoretical guarantee. At a high level, achieving a reliable estimation requires:
1) the empirical in-distribution optimal  RTG, $\hat{f}^{\star}$, to be accurate, and 
2) sufficient coverage of the offline dataset over the trajectories induced by the in-distribution optimal stitched policy.

We begin by stating a standard assumption on the regularity of the policy class $\Pi$, following \cite{brandfonbrener2022does}.

\begin{assumption}\label{assumption: regular}
For the policy class $\Pi$, we assume it is finite, and 
\begin{itemize}[leftmargin = *]
    \item For all $(a, s, g, h, a', s', g', h')$, $\pi \in \Pi$, we have $|\log \pi(a \mid s, h, g) - \log \pi(a' \mid s', h', g')| \leq c$.
    \item The approximation error of MLE in \eqref{def:mle} is bounded by $\delta_{\text{approx}}$, i.e., $\min_{\pi \in \Pi} L(\pi) \leq \delta_{\text{approx}}$, where $L(\pi) = \mathbb{E}_{s \sim P_{\beta}} \mathbb{E}_{g \sim P_{\beta}(\cdot \mid s)}
        \big[D_{\text{KL}}(P_{\beta}(\cdot \mid s, g) \| \pi(\cdot \mid s, g))\big]$
    is the expected loss.
\end{itemize}
\end{assumption}

Next, we introduce an assumption on the data distribution, which characterizes how well the offline dataset covers the target policy.

\begin{assumption}\label{assumption: data1}
Let $d_{\min}^{\beta} := \min_{h, s} \big\{ d^{\beta}_h(s) \mid d^{\beta}_h(s) > 0 \big\}$
denote the smallest positive entry of the distribution $d^{\beta}$. 
\begin{itemize}[leftmargin = *]
    \item {\bf (Return Coverage)} There exists a constant $\tilde{c} > 0$ such that for all $(s, h) \in \text{dom}(f^{\star})$,  $ P_{\beta}(g_h = f^{\star}(s, h) \mid s_h = s) \geq \tilde{c}$.
    \item {\bf (Distribution Mismatch)} There exists a constant $c_{\beta}^{\star} > 0$ such that for all $(h, s) \in [H] \times \mathcal{S}_h^{\beta}$, we have $d_h^{\star, \beta}(s)/d_h^{\beta}(s) \leq c^{\star}_\beta$,
    where $d^{\star, \beta}_h$ is the occupancy measure on states at step $h$ induced by $\pi_{\beta}^{\star}$.
\end{itemize}
\end{assumption}

\Cref{assumption: data1} imposes a \textit{partial}-type coverage assumption: it only requires the behavior policy (or offline dataset) to cover both the in-distribution optimal return-to-go and the state visitation distribution induced by the optimal stitched policy $\pi^{\star}_{\beta}$. Comparing \Cref{thm:finite sample guarantee for reinforced RCSL - partial coverage} with Corollary~3 of \cite{brandfonbrener2022does}, the term $c^{\star}_{\beta}$ plays a role analogous to $C_f := \sup_{f \in \mathcal{F}_{\beta}^{C}} P_{\pi_f^{\text{RCSL}}}(s)/P_{\beta}(s)$, which captures the worst-case distribution mismatch. Similarly, our term $\tilde{c}$ corresponds to $\alpha_f$, the lower bound on return coverage, ensuring that $P_{\beta}(g = f(s, h) \mid s_h = s) \geq \alpha_f$ for all $f \in \mathcal{F}_{\beta}^{C}$. Hence, \Cref{assumption: data1} is both mild and practically reasonable.

Next, we formally state the theoretical guarantee for  \Cref{alg-RCSL: deterministic env}.
\begin{theorem}
\label{thm:finite sample guarantee for reinforced RCSL - partial coverage}
Under \Cref{assumption: regular,assumption: data1}, if we set $\hat{f}^{\star}(s_h, h) = \argmax_{k \in T_{\cD}(s_h)} g_h^k$ in \Cref{alg-RCSL: deterministic env}, then 
for any $\delta\in(0,1)$, when $ N>\log(SH/\delta)/\log(1-d_{\min}^\beta\cdot \tilde{c})$, with probability at least $1-2\delta$, we have 
    \begin{align*}
    J(\pi_\beta^{\star}) - J(\hat{\pi}_{\cD}^{\star}) \leq O\Big(\frac{c_{\beta}^{\star}H^2}{\tilde{c}} \Big(\sqrt{c}\Big(\frac{\log|\Pi|/\delta}{N} \Big)^{1/4}+ \sqrt{\delta_{\text{approx}}} \Big) \Big).
\end{align*}
\end{theorem}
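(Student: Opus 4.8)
The plan is to split the suboptimality $J(\pi_\beta^\star) - J(\hat\pi_\cD^\star)$ into two sources of error: (i) the event that the empirical conditioning $\hat f^\star$ fails to recover the true in-distribution optimal RTG $f^\star$ on some state–stage pair, and (ii) the finite-sample error of the MLE policy $\hat\pi$ relative to $P_\beta$. Accordingly I would first establish a high-probability \emph{good event} on which $\hat f^\star(s,h) = f^\star(s,h)$ for every $(s,h)\in\mathrm{dom}(f^\star)$, and then, conditioned on this event, run a performance-difference argument that reduces the value gap to an averaged divergence between $\hat\pi$ and $P_\beta$ which the MLE guarantee controls. A union bound over the two failure probabilities then yields the stated $1-2\delta$.

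For the first step, note that in a deterministic environment every trajectory in $\cD$ is feasible under $\beta$, so $g_h^k \le f^\star(s,h)$ whenever $s_h^k = s$; hence $\hat f^\star(s,h) = \max_{k\in T_\cD(s_h)} g_h^k \le f^\star(s,h)$ always holds. Equality is attained as soon as at least one sampled trajectory both visits $s$ at stage $h$ and realizes the optimal RTG $f^\star(s,h)$. A single trajectory does so with probability $P_\beta(s_h=s)\,P_\beta(g_h=f^\star(s,h)\mid s_h=s)\ge d_{\min}^\beta\tilde c$ by the return-coverage part of \Cref{assumption: data1}, so the probability that none of the $N$ i.i.d.\ trajectories does is at most $(1-d_{\min}^\beta\tilde c)^N$. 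Taking a union bound over the at most $SH$ pairs in $\mathrm{dom}(f^\star)$ and requiring $N > \log(SH/\delta)/\log(1/(1-d_{\min}^\beta\tilde c))$ makes this at most $\delta$, giving the good event $\mathcal{E} = \{\hat f^\star = f^\star \text{ on } \mathrm{dom}(f^\star)\}$ with probability $\ge 1-\delta$.

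On $\mathcal{E}$ we have $\hat\pi_\cD^\star(\cdot\mid s,h) = \hat\pi(\cdot\mid s,h,f^\star(s,h))$, so the two policies share the same conditioning $f^\star$ and differ only through $\hat\pi$ versus $P_\beta$. I would invoke the performance-difference lemma to write $J(\pi_\beta^\star) - J(\hat\pi_\cD^\star) \le 2H\sum_{h=1}^H \mathbb{E}_{s\sim d_h^{\star,\beta}}\big[D_{\mathrm{TV}}\big(P_\beta(\cdot\mid s,h,f^\star(s,h)),\,\hat\pi(\cdot\mid s,h,f^\star(s,h))\big)\big]$, using $|Q_h|\le H$ and evaluating the advantage of $\hat\pi_\cD^\star$ under the occupancy $d^{\star,\beta}$ of the target policy $\pi_\beta^\star$. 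Next I would change measure from $d_h^{\star,\beta}$ to the behavior occupancy $d_h^\beta$ via the distribution-mismatch bound $d_h^{\star,\beta}/d_h^\beta \le c_\beta^\star$, and then use return coverage once more in the form $D_{\mathrm{TV}}(\text{at }f^\star(s,h)) \le \tfrac1{\tilde c}\,\mathbb{E}_{g\sim P_\beta(\cdot\mid s,h)}[D_{\mathrm{TV}}(\text{at }g)]$, which holds since $P_\beta(g=f^\star(s,h)\mid s)\ge\tilde c$. This pulls the coefficients $c_\beta^\star$ and $1/\tilde c$ out linearly and leaves the divergence averaged over the full behavior return distribution, i.e.\ exactly the quantity appearing in the MLE loss $L$.

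Finally I would convert total variation to KL by Pinsker and Jensen, $\mathbb{E}[D_{\mathrm{TV}}] \le \sqrt{\tfrac12\mathbb{E}[D_{\mathrm{KL}}]}$, bound the summed per-stage KL by the expected loss $L(\hat\pi)$, and apply the standard finite-class MLE generalization bound under \Cref{assumption: regular} to get $L(\hat\pi) \le \delta_{\text{approx}} + O\big(c\sqrt{\log(|\Pi|/\delta)/N}\big)$ with probability $\ge 1-\delta$; the subadditivity $\sqrt{a+b}\le\sqrt a+\sqrt b$ then yields the factor $\sqrt c\,(\log(|\Pi|/\delta)/N)^{1/4} + \sqrt{\delta_{\text{approx}}}$, while the accumulated $H\cdot H$ from the performance-difference sum produces the $H^2$ prefactor. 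A union bound over $\mathcal{E}$ and the MLE event gives the $2\delta$. I expect the main obstacle to be the bookkeeping in the reduction from the value gap to the MLE loss: one must route through the target occupancy $d^{\star,\beta}$ rather than the behavior occupancy directly, handle that $\hat\pi_\cD^\star$ is conditioned on the single return $f^\star(s,h)$ while the MLE controls only an average over returns, and sequence the change-of-measure and Pinsker steps so that $c_\beta^\star$ and $1/\tilde c$ stay linear and outside the square root instead of being swept inside it.
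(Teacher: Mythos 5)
Your proposal is correct and follows essentially the same route as the paper's proof: the same good event $\{\hat f^\star = f^\star\}$ via return coverage and a union bound over $SH$ pairs, the same reduction of the value gap to per-stage total variation under $d^{\star,\beta}$ (the paper uses the occupancy-measure Lemma 1 of \cite{brandfonbrener2022does} where you invoke the performance-difference lemma, but the resulting inequality is identical), the same change of measure via $c_\beta^\star$ and importance-weighting via $\tilde c$, then Pinsker, the MLE decomposition into $\delta_{\text{approx}}$ plus a uniform deviation controlled by McDiarmid over the finite class, and a final union bound for $1-2\delta$. Your additional observation that $\hat f^\star \le f^\star$ always holds in a deterministic environment, and your writing of the sample-size threshold as $\log(SH/\delta)/\log(1/(1-d_{\min}^\beta\tilde c))$, are both minor refinements of (and consistent with) what the paper does.
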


\Cref{thm:finite sample guarantee for reinforced RCSL - partial coverage} shows that $\algname$ converges to $\pi_{\beta}^{\star}$, which rigorously outperforms the RCSL policies $\pi_f$ for $f \in \mathcal{F}_\beta^{\text{Cst}}$ studied in \cite{brandfonbrener2022does} according to the result in \Cref{thm:superiority of the optimakl stitched policy}. 
It also indicates that the sample complexity of \Cref{alg-RCSL: deterministic env} depends on the approximation error of MLE. This error can be eliminated by selecting a sufficiently expressive function class, such as deep neural networks. Additionally, the convergence rate is $N^{-1/4}$, which is slower than the standard rate of $N^{-1/2}$ commonly established in the offline RL literature. We believe this discrepancy is due to a limitation in the current analysis, and we aim to refine it in future work. 

\subsection{Analysis for Stochastic Environments}
\label{subsec:function approximation}
Next, we consider a more general setting where the state and action spaces are large, and the underlying environment is {\it stochastic}. In this case, we can no longer determine the empirical in-distribution optimal RTG by directly selecting the RTG from offline datasets. To address this challenge, we propose to estimate $\hat{f}^{\star}$ by general function approximation. For now, we do not specify the estimation method for $\hat{f}^{\star}$, it can be instantiated using expectile regression or quantile regression in later sections.
We now outline the assumptions necessary for the theoretical guarantee of \Cref{alg-RCSL: deterministic env} with general function approximation of the in-distribution optimal RTG.

\begin{assumption}\label{assumption: func1}
For the conditioning function $\hat{f}^{\star}$ and the policy class $\Pi$, we assume:
\begin{itemize}[leftmargin = *]%
    \item There exists an error function $\text{Err}(N, \delta, \tilde{c})$ that depends on the sample size $N$ and failure probability $\delta$, such that $\mathbb{E}_{s \sim d_h^{\beta}} \big[(f^{\star}(s,h) - \hat{f}^{\star}(s,h))^2\big] \leq \text{Err}(N,\delta, \tilde{c}), \forall h \in [H]$.
    \item For any $(s, h, \pi) \in \mathcal{S} \times [H] \times \Pi$, given $g_1 \neq g_2$, there exists a constant $\gamma > 0$ such that $\text{TV}(\pi(\cdot \mid s, h, g_1) \| \pi(\cdot \mid s, h, g_2)) \leq \gamma |g_1 - g_2|$,
    where $\text{TV}(\cdot \| \cdot)$ denotes the total variation distance.
\end{itemize}
\end{assumption}

The first condition in \Cref{assumption: func1} ensures that the estimation of the conditioning function is sufficiently accurate. It is not meant to introduce additional constraints, but rather to provide a general and abstract formulation—captured via terms like Err($N,\delta,\tilde{c}$)—that subsumes a wide range of cases.
The second condition guarantees that small errors in the conditioning function do not result in significant divergence in the estimated return-conditioned policy.
Next we state our theorem.

\begin{theorem}
\label{thm:RCSL with general function approximation}
    Assume \Cref{assumption: regular,assumption: data1} hold. Additionally, if the conditioning function $\hat{f}^{\star}$ and the policy class $\Pi$ satisfy \Cref{assumption: func1}, then for any $\delta \in (0,1)$, 
     with probability at least $1-2\delta$, the policy $\hat{\pi}_{\cD}^{\star}$ learned by \Cref{alg-RCSL: deterministic env} satisfies
     \begin{small}
       \begin{align*}
         J(\pi_\beta^{\star}) - J(\hat{\pi}_{\cD}^{\star})&\leq O\Big(\frac{c^{\star}_{\beta}H^2}{\tilde{c}} \Big(\sqrt{c}\Big(\frac{\log|\Pi|/\delta}{N} \Big)^{1/4}+ \sqrt{\delta_{\text{approx}}} \Big)+c_{\beta}^{\star}H^2\gamma\sqrt{\text{Err}(N,\delta, \tilde{c})} \Big).
     \end{align*}       
     \end{small}
\end{theorem}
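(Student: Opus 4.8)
The plan is to bound $J(\pi_\beta^{\star}) - J(\hat{\pi}_{\cD}^{\star})$ by isolating the two error sources inside $\hat{\pi}_{\cD}^{\star}(\cdot\mid s,h)=\hat{\pi}(\cdot\mid s,h,\hat{f}^{\star}(s,h))$: the MLE error of $\hat{\pi}$ relative to the true conditional behavior $P_\beta$, and the estimation error of $\hat{f}^{\star}$ relative to $f^{\star}$. I would begin from the performance difference lemma with $\pi_1=\pi_\beta^{\star}$ and $\pi_2=\hat{\pi}_\cD^{\star}$, writing
\[
J(\pi_\beta^{\star}) - J(\hat{\pi}_{\cD}^{\star}) = \sum_{h=1}^H \mathbb{E}_{s\sim d_h^{\star,\beta}}\big[\langle Q_h^{\hat{\pi}_\cD^{\star}}(s,\cdot),\, \pi_\beta^{\star}(\cdot\mid s,h)-\hat{\pi}_\cD^{\star}(\cdot\mid s,h)\rangle\big].
\]
Since $|Q_h^{\hat{\pi}_\cD^{\star}}|\le H$, each inner product is at most $2H\,\text{TV}(\pi_\beta^{\star}(\cdot\mid s,h)\,\|\,\hat{\pi}_\cD^{\star}(\cdot\mid s,h))$. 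Using the distribution-mismatch bound $d_h^{\star,\beta}(s)\le c_\beta^{\star}\, d_h^{\beta}(s)$ from \Cref{assumption: data1}, I move every subsequent expectation onto $d_h^{\beta}$, which is exactly the distribution under which both parts of \Cref{assumption: data1} and \Cref{assumption: func1} are stated.

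Next I would introduce the intermediate policy $\tilde{\pi}(\cdot\mid s,h):=\hat{\pi}(\cdot\mid s,h,f^{\star}(s,h))$, which uses the learned policy head but the \emph{true} in-distribution optimal RTG, and split the total-variation term by the triangle inequality:
\[
\text{TV}(\pi_\beta^{\star}\,\|\,\hat{\pi}_\cD^{\star}) \le \underbrace{\text{TV}(P_\beta(\cdot\mid s,h,f^{\star})\,\|\,\hat{\pi}(\cdot\mid s,h,f^{\star}))}_{\text{(A): MLE error at }f^{\star}} + \underbrace{\text{TV}(\hat{\pi}(\cdot\mid s,h,f^{\star})\,\|\,\hat{\pi}(\cdot\mid s,h,\hat{f}^{\star}))}_{\text{(B): conditioning error}}.
\]
For term (A) I would reuse the argument behind \Cref{thm:finite sample guarantee for reinforced RCSL - partial coverage}: the bounded-log-likelihood condition in \Cref{assumption: regular} (range $c$) gives a concentration bound on the population loss, $L(\hat{\pi})\le \delta_{\text{approx}}+O\big(c\sqrt{\log(|\Pi|/\delta)/N}\big)$; Pinsker converts this into a bound on $\mathbb{E}[\text{TV}^2]$, and the return-coverage constant $\tilde{c}$ lets me pass from the $g$-averaged guarantee to the pointwise conditioning value $g=f^{\star}(s,h)$ at the cost of a $1/\tilde{c}$ factor. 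A final Jensen step produces the $\sqrt{c}\,(\log(|\Pi|/\delta)/N)^{1/4}+\sqrt{\delta_{\text{approx}}}$ rate and yields the first term of the bound verbatim.

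Term (B) carries the genuinely new content. Because the estimated conditioning input $\hat{f}^{\star}$ may fall outside the in-distribution returns, the MLE guarantee cannot be invoked at $\hat{f}^{\star}$ directly; this is why I route through $f^{\star}$ via $\tilde{\pi}$. The Lipschitz-in-conditioning property (second bullet of \Cref{assumption: func1}) gives $\text{TV}(\hat{\pi}(\cdot\mid s,h,f^{\star})\,\|\,\hat{\pi}(\cdot\mid s,h,\hat{f}^{\star}))\le \gamma\,|f^{\star}(s,h)-\hat{f}^{\star}(s,h)|$. Taking the $d_h^{\beta}$-expectation, applying Cauchy--Schwarz, and invoking $\mathbb{E}_{s\sim d_h^{\beta}}[(f^{\star}(s,h)-\hat{f}^{\star}(s,h))^2]\le \text{Err}(N,\delta,\tilde{c})$ (first bullet of \Cref{assumption: func1}) gives $\gamma\sqrt{\text{Err}(N,\delta,\tilde{c})}$ per step; summing over $h$ and carrying the $c_\beta^{\star}H$ prefactor from the first two steps produces the second term $c_\beta^{\star}H^2\gamma\sqrt{\text{Err}(N,\delta,\tilde{c})}$. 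Combining (A) and (B) and taking a union bound over the MLE concentration event and the $\hat{f}^{\star}$ estimation event yields the stated inequality with probability $1-2\delta$.

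The main obstacle I anticipate is precisely the decoupling in term (B): since $\hat{f}^{\star}$ is data-dependent and potentially out-of-distribution, one cannot certify $\hat{\pi}$ at that input, and the whole argument hinges on transferring control through the true $f^{\star}$ via $\tilde{\pi}$ and the Lipschitz assumption, while ensuring that every expectation has been pushed onto $d_h^{\beta}$ so that the return-coverage bound and the $\text{Err}$ bound — both stated under $d^{\beta}$ — can be applied simultaneously. Keeping the occupancy measures aligned through the performance difference lemma, the $c_\beta^{\star}$ change of measure, and the triangle inequality is the delicate bookkeeping that makes the two error terms separate cleanly and recombine into the claimed bound.
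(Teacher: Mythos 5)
Your proposal is correct and follows essentially the same route as the paper's proof: the same change of measure via $c_\beta^{\star}$, the same triangle-inequality decomposition through the intermediate policy $\hat{\pi}(\cdot\mid s,h,f^{\star}(s,h))$, the same use of return coverage $\tilde{c}$ plus Pinsker and MLE concentration for term (A), and the same Lipschitz-plus-Cauchy--Schwarz treatment of term (B). The only cosmetic difference is your first step: you invoke the performance difference lemma with $\|Q\|_\infty \le H$, whereas the paper bounds $J(\pi_\beta^{\star})-J(\hat{\pi}_{\cD}^{\star})$ by $H\|d^{\star,\beta}-d^{\star,\cD}\|_1$ and then applies the occupancy-measure coupling lemma (Lemma 1 of Brandfonbrener et al.); both yield the identical intermediate bound $2H\sum_{h}\mathbb{E}_{s\sim d_h^{\star,\beta}}[\text{TV}(\pi_\beta^{\star}(\cdot\mid s,h)\,\|\,\hat{\pi}_{\cD}^{\star}(\cdot\mid s,h))]$, after which the two arguments coincide.
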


Compared to the results in \Cref{thm:finite sample guarantee for reinforced RCSL - partial coverage}, \Cref{thm:RCSL with general function approximation} introduces an additional approximation error term, $\text{Err}(N,\delta, \tilde{c})$. While we retain the flexibility to choose our estimation method, we are particularly interested in approaches such as \emph{expectile regression} \citep{wu2024elastic} and \emph{quantile regression} \citep{koenker2001quantile}, which can potentially achieve error bounds of order $O(1/N)$. We further analyze their properties in the next section.

\section{Practical Implementation: Expectile v.s. Quantile Regression}
\label{sec:practical}
In this section, we study how different function approximation procedure, especially those utilized in literature of the conditioning function $\hat f^*$, would affect the learned algorithm, under the simple {\it tabular} setting. Existing literature \citep{wu2024elastic, zhuang2024reinformer} leverage the expectile regression \citep{newey1987asymmetric} due to its simplicity. It returns the empirical conditioning function by $\hat{f}^{\star}=\argmin_{\tilde{f}\in\cF} \sum_{k=1}^K \big[L_2^{\alpha}(g_h^k - \tilde{f}\big(s_h^k,h\big)
    \big)\big]$,
where $L_2^{\alpha}(u) = |\alpha- \ind(u<0)|u^2$ and $\alpha$ is the hyperparameter in order to control how close expectile regression is to the vanilla $L_2$ regression. 
However, it is trivial to show that the expectile estimator with $\alpha \neq 1$ could lead to out of distribution RTG. Formally, there exists a tabular MDP and a behavior policy $\beta$ such that the $\algname$ with the expectile regression for $\hat{f}^{\star}$ estimation can not find the optimal policy $\pi_\beta^{\star}$ for sure.
We postpone the proof to 
\Cref{sec: hard instances for expectile regression}. The take away message is that the $L_2$ loss of the expectile regression leads to out-of-distribution returns when $\alpha\neq 1$, due to the fact that $L_2$ loss is less robust to the noise.

To address this issue, we consider the quantile regression \citep{koenker2001quantile}, which returns the empirical conditioning function by $\hat{f}^{\star}=\argmin_{\tilde{f}\in\cF} \sum_{k=1}^K L_1^{\alpha}(g_h^k - \tilde{f}\big(s_h^k,h\big)\big)$, where $L_1^{\alpha}(u) = |\alpha- \ind(u<0)|\cdot|u|$ is the $L_1$ loss. 
Generally speaking, the $L_1$ loss is more robust to the noise of the return, which makes \Cref{alg-RCSL: deterministic env} with quantile regression for $\hat{f}^{\star}$ estimation better than its expectile regression counterpart. In the following theorem, we state that with a large sample size, the $\algname$ with quantile regression can exactly recover the in-distribution optimal RTG, finding the optimal policy. In contrast, $\algname$ with expectile regression introduces bias in $f^{\star}$ estimation, learns out-of-distribution RTGs, and fails to find the optimal policy. We make the following assumption here. 
\begin{assumption}\label{assumption: environment}
We assume that the environment is deterministic. Besides, we assume there is no tie in the RTG: for any $h\in[H]$, $\forall \tau^1,\tau^2\in T_{\beta}$, such that $s^1_h=s^2_h$ and $a^1_h\neq a^2_h$, we have $g^1_h\neq g^2_h$. 
\end{assumption}

\Cref{assumption: environment} essentially suggests a setting where all trajectories generated by the behavior policy $\beta$ can be `ranked' based on their RTG. Under \Cref{assumption: environment}, the optimal policy $\pi^{\star}_\beta$ can be represented by a trajectory starting from the initial state, which enables a simplified analysis of how various training methods influence $\hat f^{\star}$. 

\begin{theorem}
\label{thm:theoretical guarantee for reinforced RCSL with quantile regression}
With \Cref{assumption: environment}, we further assume that for all $(s,h)\in \text{dom}(f^{\star})$, there exists a constant $\tilde{c}$, such that
    $P_{\beta}(g_h=f^{\star}(s,h)|s_h=s)\geq \tilde{c}$. We set $\alpha > 1-\tilde{c}/2$ in the $L_2^{\alpha}$ loss with quantile regression.
Then for any $\delta\in(0,1)$, when $N\geq \max\{\frac{2}{d_{\min}^{\beta,2}}\log\frac{2SH}{\delta}, \frac{4}{\tilde{c}^2d_{\min}^\beta}\log\frac{2SH}{\delta}\}$
with probability at least $1-\delta$, we have $J(\pi^{\star}_{\beta}) = J(\hat{\pi}_{\cD}^{\star})$, where $\hat{\pi}_{\cD}^{\star}$ is the policy learned by \Cref{alg-RCSL: deterministic env}.
\end{theorem}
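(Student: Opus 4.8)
The plan is to exploit the fact that, in the tabular deterministic setting, the pinball objective $\sum_k L_1^{\alpha}(g_h^k-\tilde f(s_h^k,h))$ \emph{decouples across states}: for each visited $(s,h)$ the minimizer $\hat f^{\star}(s,h)$ is exactly the empirical $\alpha$-quantile of the RTG samples $\{g_h^k : k\in T_{\cD}(s)\}$. The entire argument then reduces to showing that this empirical quantile lands \emph{exactly} on $f^{\star}(s,h)$ for every state reachable by $\pi^{\star}_\beta$, after which the no-tie deterministic structure of \Cref{assumption: environment} forces exact recovery of both the action sequence and the value, giving $J(\pi^{\star}_\beta)=J(\hat\pi^{\star}_{\cD})$.

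First I would settle the population quantile. Since $f^{\star}(s,h)$ is by definition the \emph{maximum} in-distribution RTG at $(s,h)$, the return-coverage hypothesis $P_\beta(g_h=f^{\star}(s,h)\mid s_h=s)\ge\tilde c$ is equivalent to an upper-tail bound $P_\beta(g_h\ge f^{\star}(s,h)\mid s_h=s)\ge\tilde c$. Consequently the population $\alpha$-quantile equals $f^{\star}(s,h)$ whenever $1-\alpha<\tilde c$. Choosing $\alpha>1-\tilde c/2$ leaves a constant slack of $\tilde c/2$, and this slack is precisely the margin I will later spend to absorb sampling error when passing from the population quantile to the empirical one.

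The core estimation step is a two-level concentration argument carried out uniformly over all $SH$ state–stage pairs. At the outer level, a Chernoff/Hoeffding bound on the visit indicator $\ind(s_h^k=s)$ guarantees that every feasible $(s,h)$ is visited $\Omega(Nd_{\min}^{\beta})$ times; controlling this failure event is where the requirement $N\ge \frac{2}{d_{\min}^{\beta,2}}\log\frac{2SH}{\delta}$ enters. At the inner level, conditioning on the visit count, a second Bernoulli concentration on the event $\{g_h=f^{\star}(s,h)\}$ shows that the \emph{empirical} upper-tail mass at $f^{\star}(s,h)$ stays above $\tilde c/2$, which produces the second requirement $N\ge\frac{4}{\tilde c^2 d_{\min}^{\beta}}\log\frac{2SH}{\delta}$. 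Because $\alpha>1-\tilde c/2$ forces $1-\alpha<\tilde c/2$ while the empirical tail mass exceeds $\tilde c/2$, and because the no-tie hypothesis guarantees a strictly positive gap between $f^{\star}(s,h)$ and the next-largest attainable RTG, the empirical $\alpha$-quantile cannot be interpolated down to any smaller value and must equal $f^{\star}(s,h)$ exactly. A union bound over the $SH$ state–stage pairs then yields $\hat f^{\star}=f^{\star}$ on the support of the optimal occupancy with probability at least $1-\delta$.

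Finally, conditioning the learned policy on the recovered $\hat f^{\star}=f^{\star}$ gives exact policy recovery: under \Cref{assumption: environment} the RTG uniquely identifies the action taken at each visited state, so on the optimal trajectory $\hat\pi^{\star}_{\cD}(\cdot\mid s,h)=\hat\pi(\cdot\mid s,h,f^{\star}(s,h))$ selects the same deterministic action as $\pi^{\star}_\beta$; since in a deterministic environment the value is determined entirely by the induced trajectory, the two values coincide. The main obstacle I anticipate is the nested concentration in the third step — specifically, verifying that the factor-of-two buffer encoded in $\alpha>1-\tilde c/2$ is exactly enough to keep the empirical quantile pinned to the maximal RTG despite randomness in both the visit counts and the conditional return distribution, and that the strict no-tie gap rules out the pinball minimizer collapsing onto a suboptimal RTG level.
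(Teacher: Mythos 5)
Your proposal is correct and follows essentially the same route as the paper's proof: the same two-level Hoeffding concentration (first on visit counts via $d_{\min}^{\beta}$, yielding the $N\ge \tfrac{2}{d_{\min}^{\beta,2}}\log\tfrac{2SH}{\delta}$ condition, then conditionally on the event $\{g_h=f^{\star}(s,h)\}$, yielding the $N\ge\tfrac{4}{\tilde c^{2}d_{\min}^{\beta}}\log\tfrac{2SH}{\delta}$ condition), the same union bound over $SH$ pairs, and the same observation that empirical tail mass above $\tilde c/2$ combined with $\alpha>1-\tilde c/2$ pins the empirical quantile to $f^{\star}$. If anything, your final step—using the no-tie assumption to argue that conditioning on $f^{\star}$ uniquely determines the action and hence $J(\pi^{\star}_{\beta})=J(\hat\pi^{\star}_{\cD})$—is spelled out more explicitly than in the paper, which leaves that recovery step implicit.
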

\Cref{thm:theoretical guarantee for reinforced RCSL with quantile regression} states that 
with sufficient data and a properly chosen hyperparameter $\alpha$, \Cref{alg-RCSL: deterministic env} with quantile regression instantiation can find the in-distribution optimal stitched policy, addressing the issue of expectile regression mentioned above. Notably, the choice of $\alpha$ depends on the data coverage level $\tilde{c}$. In the next section, we will validate this observation using experimental study.

\section{Experiments}
In this section, we conduct several numerical experiments to answer the following two questions: \textit{How does different choice of hyperparameter $\alpha$ affect $\algname$'s stitching ability?} and \textit{How does $\algname$ compare to existing extensions of RCSL in literature?}

\subsection{An Illustration of $\algname$'s Stitching Ability }

To answer the first question, we conduct a simulation study in PointMaze to showcase the stitching ability of \Cref{alg-RCSL: deterministic env} with expectile regression and quantile regression, and study how different choices of $\alpha$ affect stitching. The simulated environment  is a point-mass navigation task. Red dots represent   
\begin{wrapfigure}[16]{r}{0.35\textwidth}
    \centering
    \includegraphics[width=\linewidth]{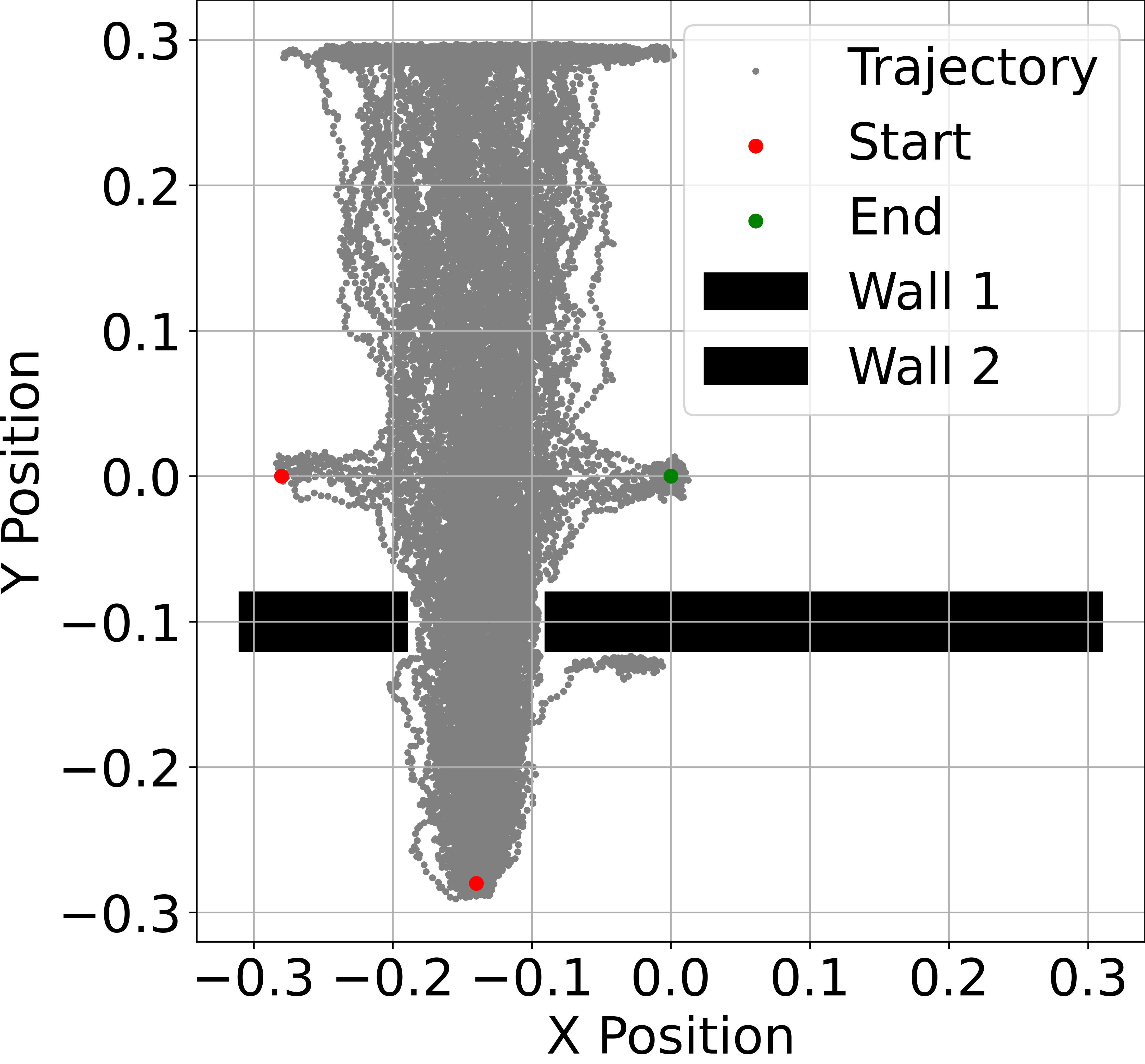}
    \caption{The simulated PointMaze environment}
    \label{fig: simulation env}
\end{wrapfigure}
starting points and the green dot is the goal state. The agent learns a policy to reach the goal from various starting positions. The offline dataset contains two types of trajectories:
{\bf Type I}: starting from the left red point and going directly to the goal;
{\bf Type II}: starting from the bottom red point and moving upward without reaching the goal. To succeed when starting from the bottom red point, the agent must {\bf learn to stitch}—i.e., combine information from Type II and Type I trajectories to reach the goal. We also inject action noise at each step to evaluate generalization.

We vary the proportion of Type I trajectories to simulate different levels of coverage of the optimal return-to-go in the offline dataset, which corresponds to $\tilde{c}$ in \Cref{assumption: data1}. To analyze how this affects performance, we test \Cref{alg-RCSL: deterministic env} leveraging expectile regression and quantile regression with various values of the hyperparameter $\alpha$ for $\hat{f}^{\star}$ estimation. We find that with {\bf 10\%} Type I trajectories, $\alpha = 0.95$ enables successful stitching (\Cref{fig:our_expectile_0.95_point_mass-stitch-easy_0.1,fig:our_quantile_0.95_point_mass-stitch-easy_0.1}), while $\alpha = 0.85$ fails (\Cref{fig:our_expectile_0.85_point_mass-stitch-easy_0.1,fig:our_quantile_0.85_point_mass-stitch-easy_0.1}); With {\bf 1\%} Type I trajectories, a higher $\alpha = 0.99$ is required to achieve stitching (\Cref{fig:our_expectile_0.99_point_mass-stitch-easy_0.01,fig:our_quantile_0.99_point_mass-stitch-easy_0.01}), whereas $\alpha = 0.90$ fails (\Cref{fig:our_expectile_0.9_point_mass-stitch-easy_0.01,fig:our_quantile_0.9_point_mass-stitch-easy_0.01}). Thus, the results are consistent with our theoretical findings in \Cref{thm:theoretical guarantee for reinforced RCSL with quantile regression}, 
the hyperparameter $\alpha$ should be chosen according to the underlying coverage factor $\tilde{c}$. 

\begin{figure}[h]
    \centering
    \subfigure[Expectile ($\alpha=0.85$)]{\includegraphics[width=0.24\linewidth]{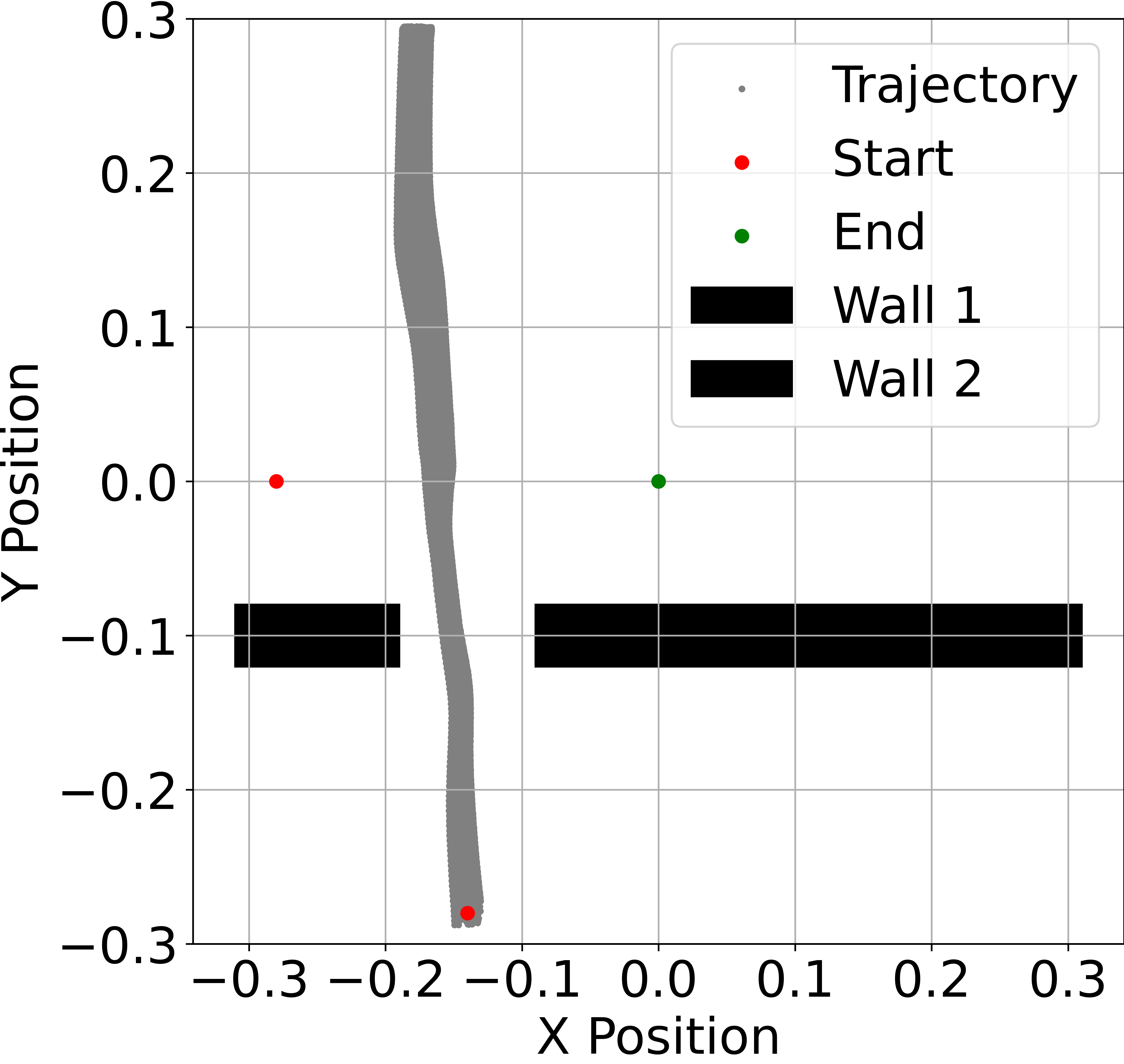}\label{fig:our_expectile_0.85_point_mass-stitch-easy_0.1}}
    \subfigure[Expectile ($\alpha=0.95$)]{\includegraphics[width=0.24\linewidth]{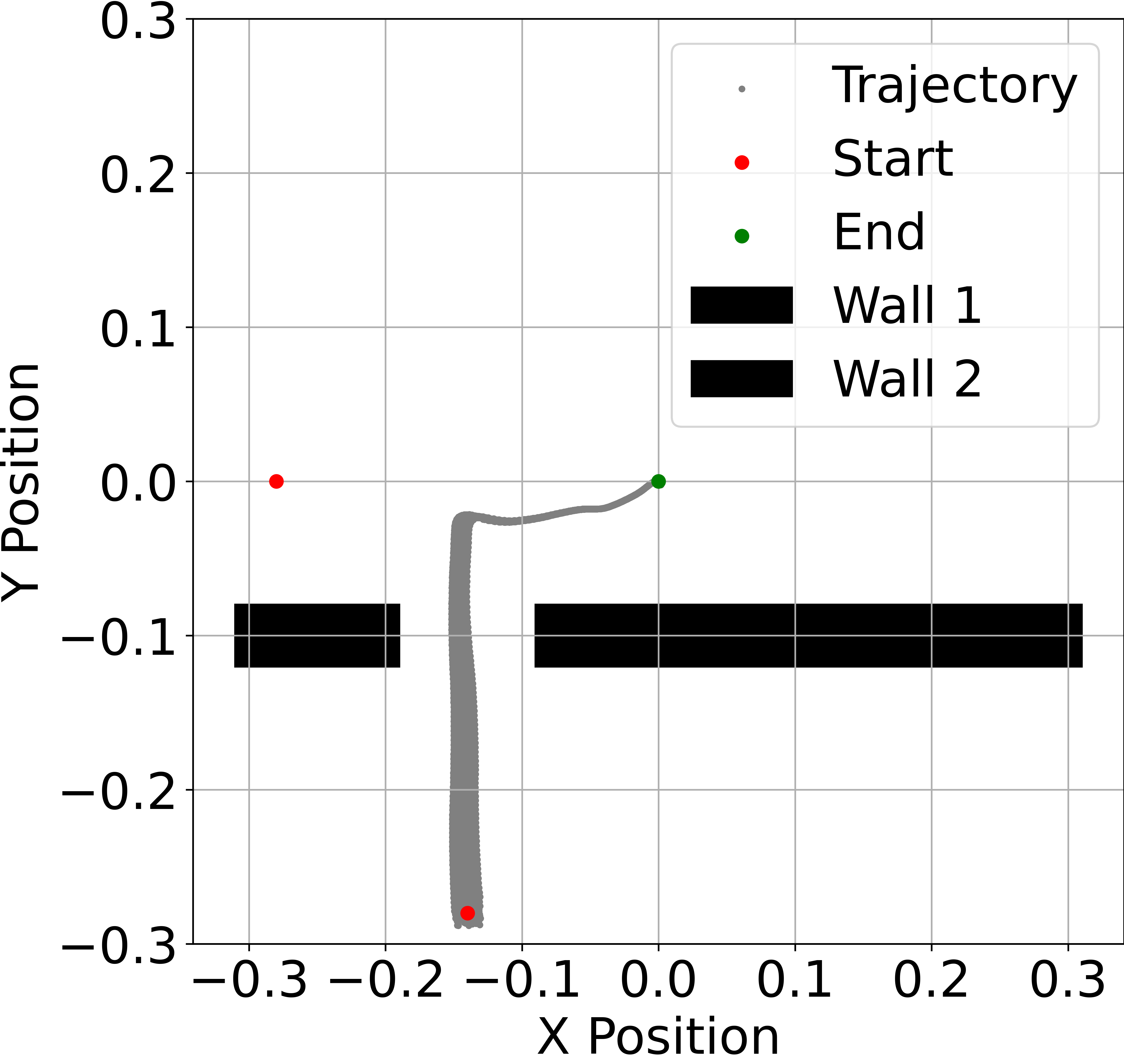}\label{fig:our_expectile_0.95_point_mass-stitch-easy_0.1}}
    \subfigure[Quantile ($\alpha=0.85$)]{\includegraphics[width=0.24\linewidth]{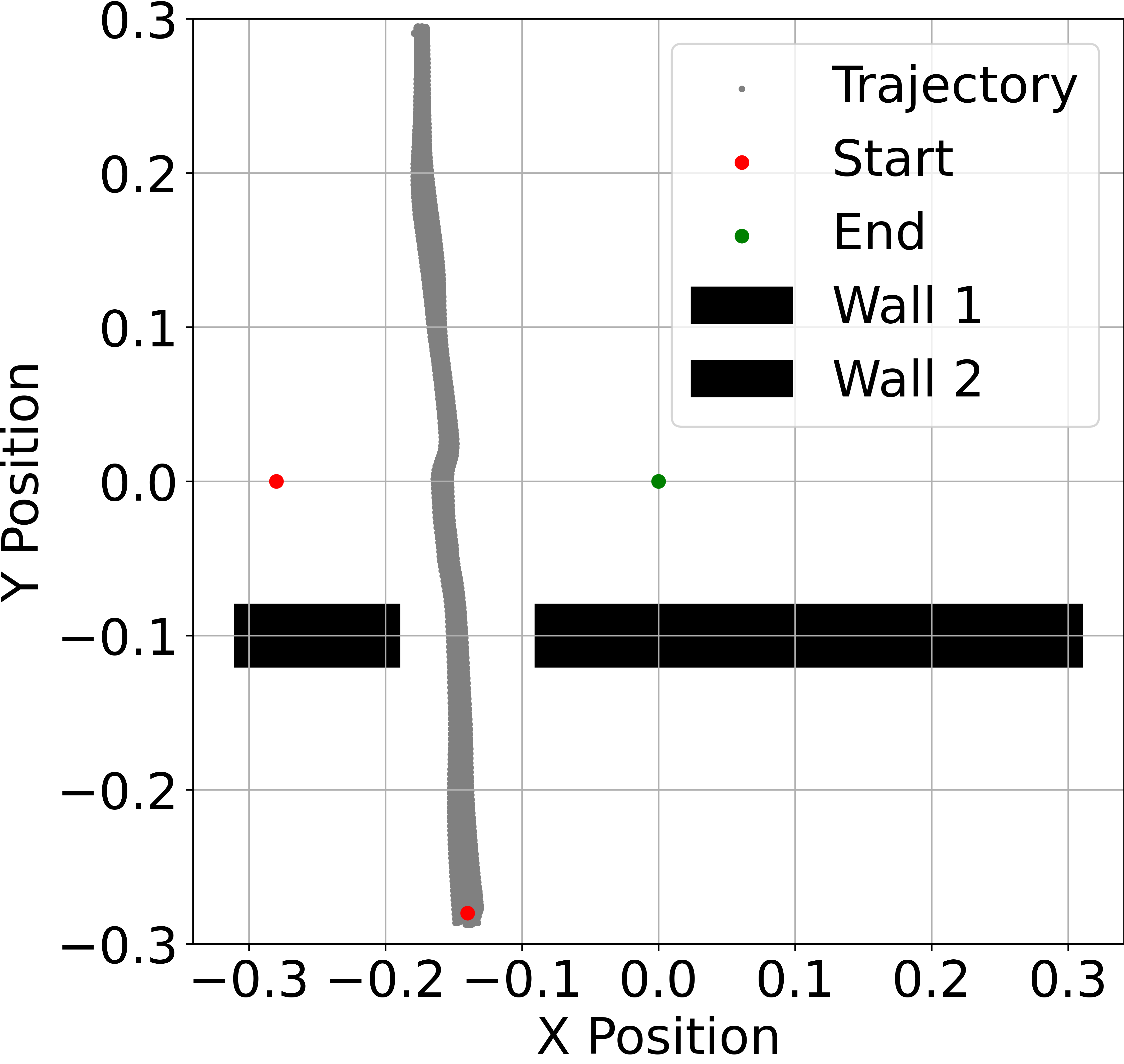}\label{fig:our_quantile_0.85_point_mass-stitch-easy_0.1}}
    \subfigure[Quantile ($\alpha=0.95$)]{\includegraphics[width=0.24\linewidth]{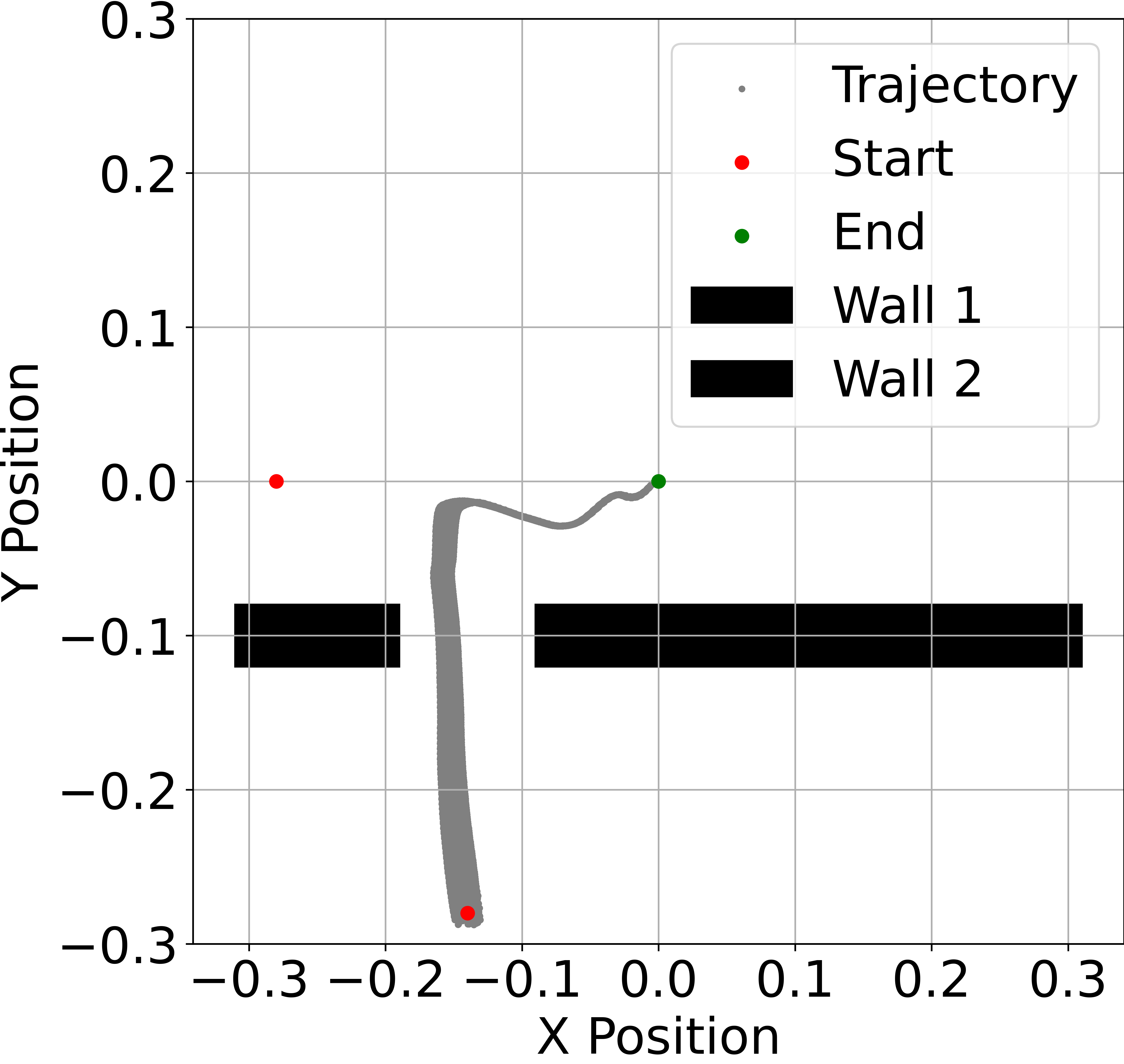}\label{fig:our_quantile_0.95_point_mass-stitch-easy_0.1}}
    \caption{Illustration of the stitching ability of $\algname$ - Proportion of type I trajectories in the dataset is set to 0.1. $\algname$ with $\alpha=0.85$ fails to stitch, while with $\alpha=0.95$ succeeds  to stitch.}
    \label{fig:enter-label}
\end{figure}
\begin{figure}
    \centering
    \subfigure[Expectile ($\alpha=0.9$)]{\includegraphics[width=0.24\linewidth]{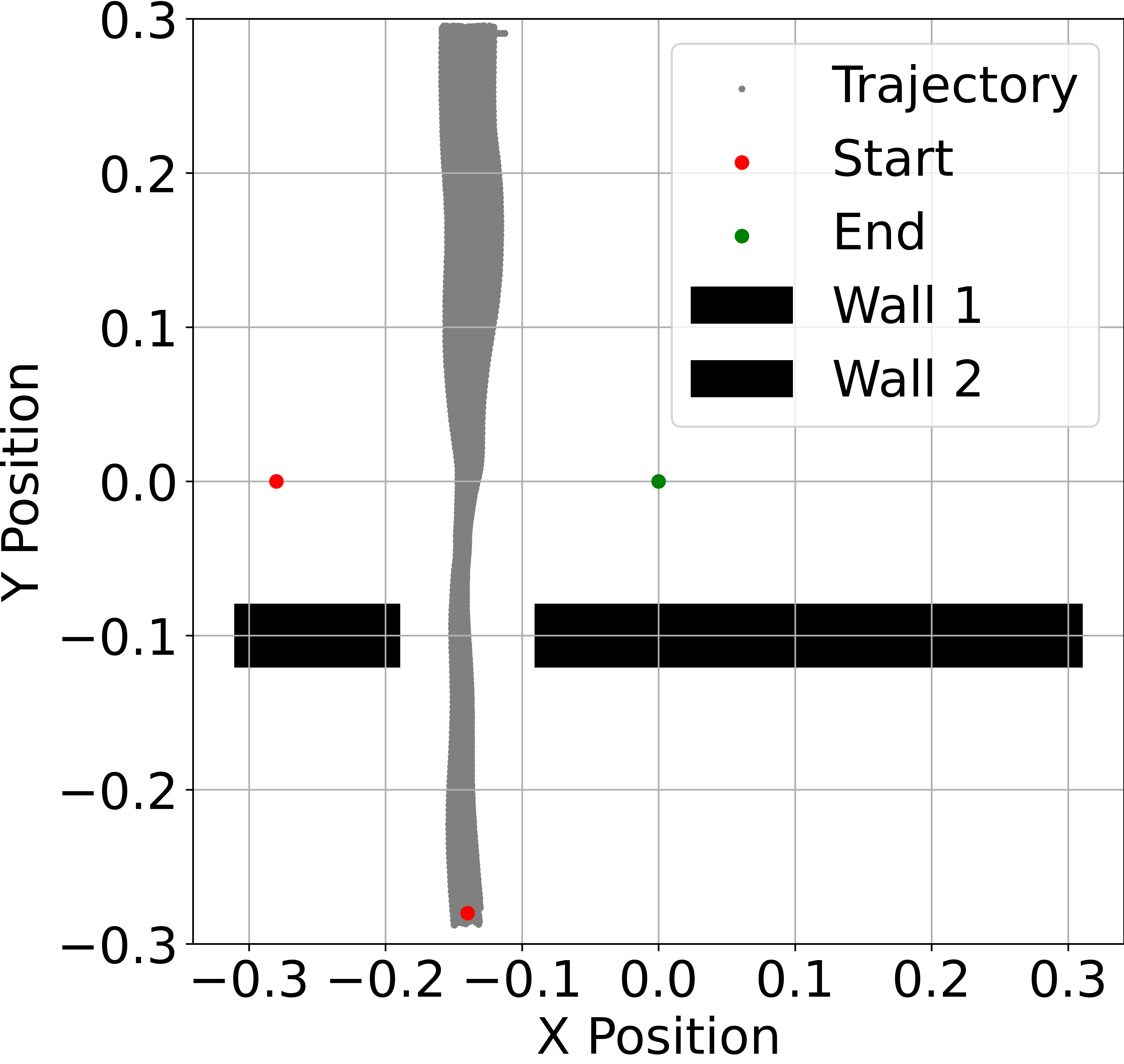}\label{fig:our_expectile_0.9_point_mass-stitch-easy_0.01}}
    \subfigure[Expectile ($\alpha=0.99$)]{\includegraphics[width=0.24\linewidth]{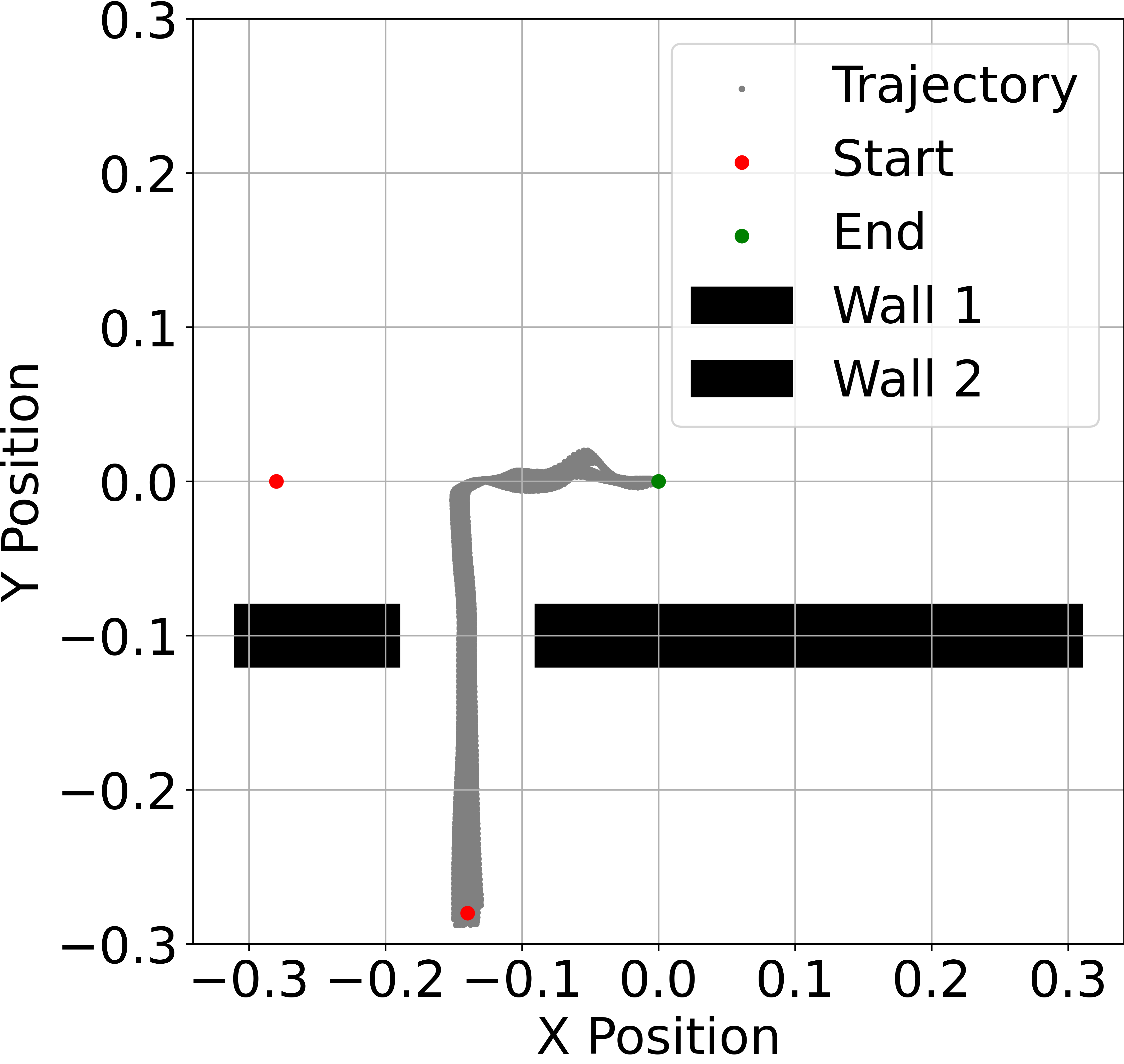}\label{fig:our_expectile_0.99_point_mass-stitch-easy_0.01}}
    \subfigure[Quantile ($\alpha=0.9$)]{\includegraphics[width=0.24\linewidth]{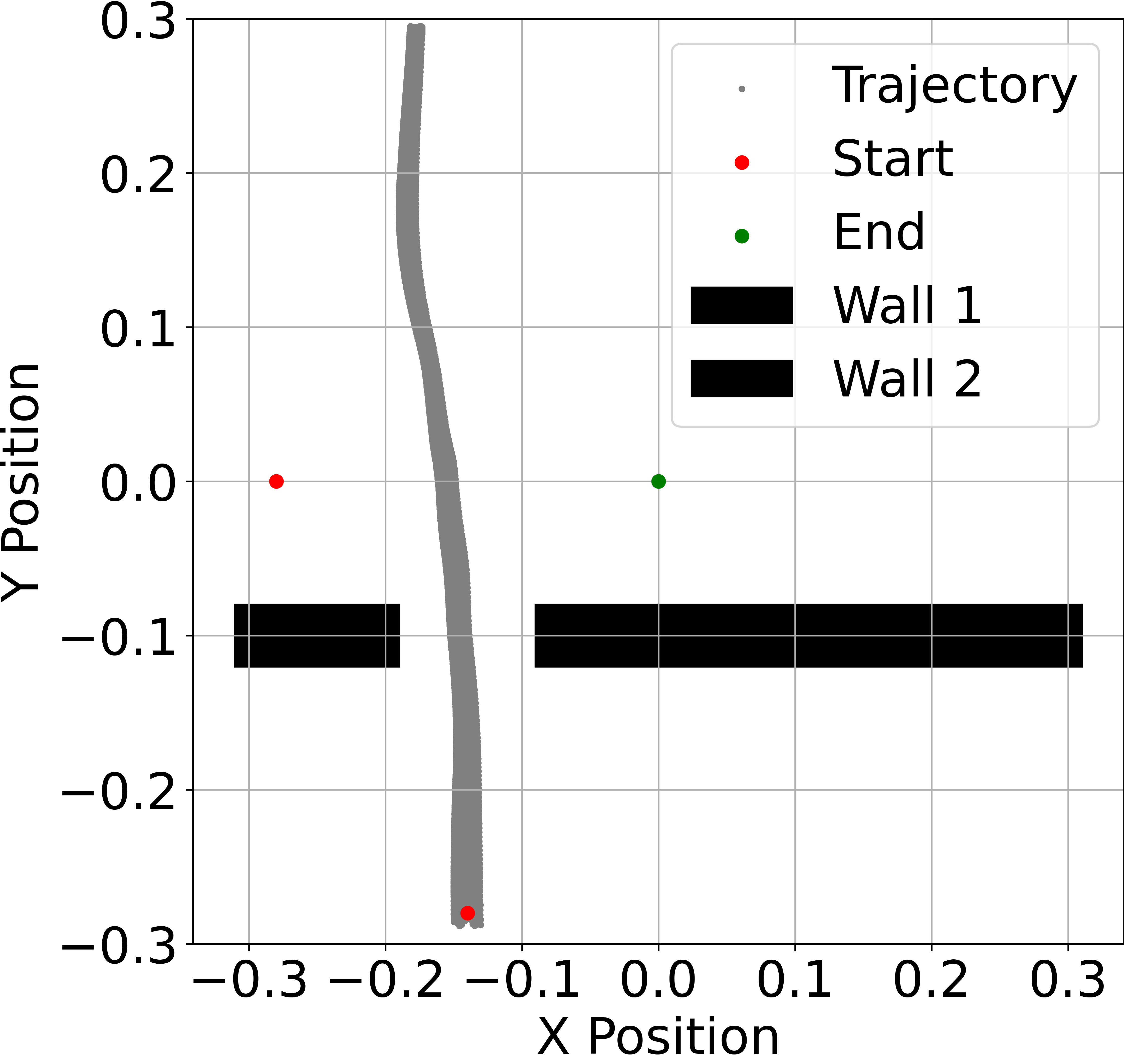}\label{fig:our_quantile_0.9_point_mass-stitch-easy_0.01}}
    \subfigure[Quantile ($\alpha=0.99$)]{\includegraphics[width=0.24\linewidth]{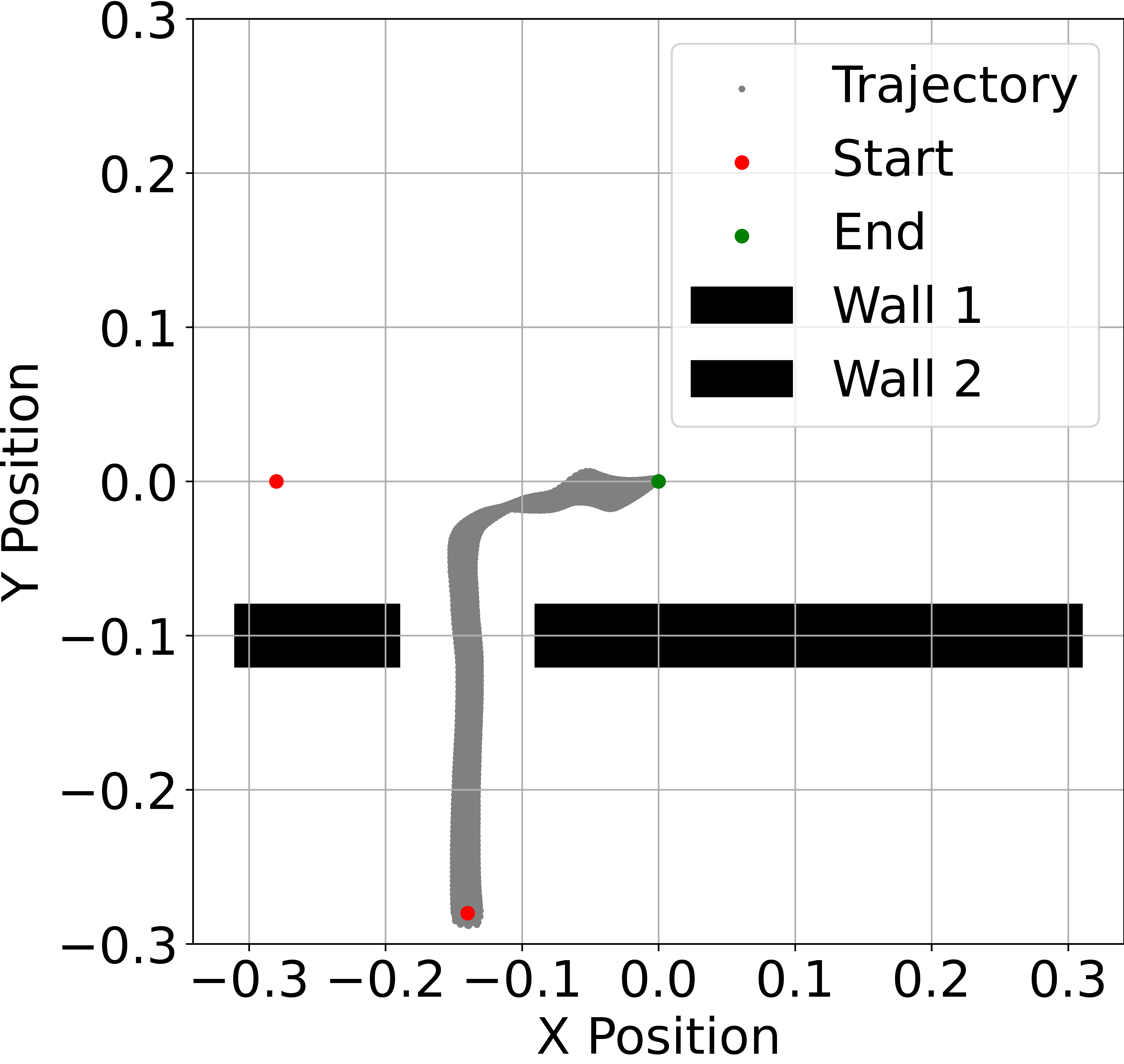}\label{fig:our_quantile_0.99_point_mass-stitch-easy_0.01}}
    \caption{Illustration of Stitching - Proportion of type I trajectories in the offline dataset is set to 0.01. $\algname$ with $\alpha=0.85$ fails to stitch, while with $\alpha=0.95$ succeeds  to stitch. 
    }
\end{figure}

\subsection{D4RL Benchmark}
To answer the second question, we test several $\algname$ variants under the D4RL Gym (halfcheetah, hopper and walker2d) and Antmaze environments \citep{fu2020d4rl}. 
Details on experiment setup, implementation are postponed to \Cref{sec: Experiments Details}.
\subsubsection{D4RL Gym}

{\bf $\algname$ with RvS.} We implement \Cref{alg-RCSL: deterministic env} with expectile regression and quantile regression based on the RvS \citep{emmons2022rvs}, heuristically motivated from the algorithm design and theoretical results developed, that can deal with large state and action spaces leveraging powerful function approximations. 
We compare our $\algname$ with the vanilla RvS as a baseline method. For RvS, achieving optimal performance during inference requires searching for the target RTGs \citep{emmons2022rvs}. However, this process is often impractical in real-world scenarios.
To address this, we select three appropriate target RTG {\it fraction ratios} as $0.7,0.9,1.1$ to guide the RvS instead, which means the initial target RTG will be 
\begin{align}
\label{eq: target RTG}
  \text{RTG}_{\text{inital}} = (\text{RTG}_{\max}-\text{RTG}_{\min}) * \text{fraction} + \text{RTG}_{\min},
\end{align}  
where $\text{RTG}_{\max}$ and $\text{RTG}_{\min}$ are the maximum RTG and minimum RTG from the random policy and expert policy in corresponding environments respectively.
Experiment results are shown in \Cref{tab:reusls_fixed_goal_0.99}. 
We can conclude that $\algname$ outperforms the RvS framework across all fixed target RTG fraction ratios. This demonstrates the effectiveness and robustness of our approach.
\begin{table*}[t!]
\center
\caption{Normalized score on D4RL Gym for RvS and $\algname$ with expectile ($\alpha=0.99$) and quantile ($\alpha=0.99$) regression respectively. The inference of RvS is conditioned on three target RTGs defined in \eqref{eq: target RTG} based on three {\it fraction ratios} (0.7, 0.9, and 1.1 respectively). We report the mean and standard deviation of the normalized score for five seeds. 
}
\resizebox{1\textwidth}{!}{
\label{tab:reusls_fixed_goal_0.99}
\begin{tabular}{cccccc}
\toprule
\multirow{2}{*}{\textbf{Dataset}} & \multicolumn{3}{c}{\textbf{RVS}}                                            & \multirow{2}{*}{\textbf{$\algname$-Expectile}} & \multirow{2}{*}{\textbf{$\algname$-Quantile}} \\ \cmidrule(lr){2-4}
                                  & \multicolumn{1}{c}{0.7} & \multicolumn{1}{c}{0.9} & \multicolumn{1}{c}{1.1} &                                 &                                \\ \midrule
halfcheetah-medium                & 43.10$\pm$0.64        &36.78$\pm$3.04
                   & 25.47$\pm$4.67
                  & 42.09$\pm$0.50
                        & 42.24$\pm$0.35
                     \\
halfcheetah-medium-replay         & 15.24$\pm$7.60
                  & 9.59$\pm$6.12
            & 5.17$\pm$4.72
          & 38.07$\pm$0.91
                      & 38.71$\pm$0.46 
                    \\
halfcheetah-medium-expert         & 85.48$\pm$1.32
                & 90.97$\pm$1.26 
              & 91.09$\pm$1.11
        & 90.95$\pm$1.18
                     & 92.25$\pm$0.62
                     \\ \midrule
hopper-medium                     & 52.05$\pm$3.90	
                 & 47.91$\pm$4.72	
              & 38.49$\pm$13.57	
           & 54.49$\pm$4.18	
                        & 53.00$\pm$9.62
                     \\
hopper-medium-replay           & 52.53$\pm$22.07	
                 & 53.49$\pm$29.42	
             & 34.30$\pm$14.06 
           & 46.83$\pm$14.55	
                       & 53.27$\pm$20.74
                   \\
hopper-medium-expert             & 65.00$\pm$6.82	
             & 96.18$\pm$20.65	
            & 106.54$\pm$8.25	
            & 106.62$\pm$8.17	
                      & 100.75$\pm$10.92 
                     \\ \midrule
walker2d-medium              & 70.94$\pm$4.11	
               & 72.25$\pm$2.88	
             & 68.82$\pm$3.79	
           & 71.18$\pm$4.22	
                        & 71.66$\pm$4.33 
                      \\
walker2d-medium-replay            & 41.24$\pm$12.86	
              & 47.25$\pm$17.79	
               & 26.58$\pm$20.09	
              & 36.34$\pm$10.52	
                      & 44.85$\pm$9.00
                 \\
walker2d-medium-expert          &  65.06$\pm$1.15	
              & 69.23$\pm$3.70	
                 & 106.22$\pm$0.58	
               & 101.03$\pm$6.29	
                    & 105.18$\pm$0.70 
                   \\ \midrule
Total           & 490.62		

              & 523.66		

                 &502.68		

               & 587.61		
                    & {\bf 601.91} 
                   \\ \bottomrule
\end{tabular}}
\end{table*}

\noindent\textbf{Ablation study}
The hyperparameter $\alpha$ in both expectile and quantile regression controls how these methods emphasize different regions of the return distribution. Our experiments systematically varied $\alpha$ across ${0.9, 0.99, 0.999}$ to study its impact on policy performances. As shown in \Cref{tab:reusls_fixed_goal_diffterent-alpha}, the performance of our methods initially increases as $\alpha$ increases, but with $\alpha = 0.999$, the performance of $\algname$-Quantile does not improve further. Predicting higher RTGs generally leads to better performance when the conditioning function is learned within a reasonable range. However, extreme outliers such as those introduced with $\alpha = 0.999$ in $\algname$-Expectile can hurt the performance.
\begin{table*}[t!]
\caption{{Normalized score on D4RL Gym for our methods with different $\alpha$ in expectile and quantile. We report the mean and standard deviation of the normalized score for five seeds.}}
\center
\resizebox{1\textwidth}{!}{
\label{tab:reusls_fixed_goal_diffterent-alpha}
\begin{tabular}{ccccccc}
\toprule
\multirow{2}{*}{\textbf{Dataset}}  & \multicolumn{3}{c}{\textbf{$\algname$-Expectile}} & \multicolumn{3}{c}{\textbf{$\algname$-Quantile}}                \\
\cmidrule(lr){2-4}\cmidrule(lr){5-7}
                          & 0.9      & 0.99     & 0.999   & 0.9    & 0.99   & 0.999 \\ \midrule
halfcheetah-medium        & 42.04$\pm$0.36

         &  42.08$\pm$0.49

        &    42.42$\pm$0.39

     & 42.28 $\pm$ 0.26

       & 42.24$\pm$0.35

     & 42.47$\pm$0.42

                      \\
halfcheetah-medium-replay & 35.89$\pm$0.29

         &  38.07$\pm$0.91

        &    38.13$\pm$1.81

     & 37.93$\pm$0.30

       & 38.71$\pm$0.46

       & 38.77$\pm$0.52     

                  \\
halfcheetah-medium-expert & 89.85$\pm$1.58

         &   90.95$\pm$1.18

       &   90.90$\pm$1.09

      &  91.70$\pm$1.09
      &   92.25$\pm$0.62

     &  91.81$\pm$0.98

                     \\ \midrule
hopper-medium             & 57.01$\pm$1.73

         &54.49$\pm$4.18

          &  46.44$\pm$4.42

       &  57.24$\pm$2.63

     &  53.00$\pm$9.62

      &  55.62$\pm$8.49

                      \\
hopper-medium-replay      & 32.26$\pm$10.57

         &   46.83$\pm$14.55

       &  53.90$\pm$13.73

       &  34.08$\pm$4.96

     & 53.27$\pm$20.74
     
       & 72.60$\pm$14.08

                        \\
hopper-medium-expert      & 102.58$\pm$4.93

         &  106.62$\pm$8.17

        &    98.57$\pm$16.39

     &  104.04$\pm$7.67

      &  100.75$\pm$10.92

       &  92.24$\pm$30.77  

                    \\ \midrule
walker2d-medium           & 71.29$\pm$3.61

         &   71.18$\pm$4.22

       &    71.13$\pm$2.31

     & 70.23$\pm$5.04

      &  71.66$\pm$4.33

      &  71.94$\pm$4.75

                    \\
walker2d-medium-replay    & 29.87$\pm$7.25

         & 36.34$\pm$10.52

         &   39.46$\pm$19.24

      &  16.60$\pm$5.60

      & 44.85$\pm$9.00

       & 55.62$\pm$5.00

                      \\
walker2d-medium-expert    & 60.26$\pm$15.55

         &  101.04$\pm$6.29

        &    105.08$\pm$0.93

     &  104.73$\pm$1.79

      & 105.18$\pm$0.70

        &  102.21$\pm$5.14

                      \\ \midrule
Total                     & 521.05
   &  587.61
  &  586.03 & 558.84
  & 601.91
  & {\bf 623.29}
  
                       \\ \bottomrule
\end{tabular}}
\end{table*}

{\bf $\algname$ with DT} We extend our $\algname$ framework to DT based methods. In particular, we use the transformer architecture of DT to learn the policy $\pi$ in Line \ref{algline: policy estimation} of \Cref{alg-RCSL: deterministic env}. We use MLP to conduct quantile regression and expectile regression to learn the conditioning function in Line \ref{algline:conditioning function} of \Cref{alg-RCSL: deterministic env}. We call this method DT-R2CSL.
Experiment results are shown in  \Cref{tab:DT}.
We can conclude that in most cases the best performance belongs to our proposed DT-extensions.

\begin{table}[t!]
\caption{{Normalized score on D4RL Gym for DT, DT-$\algname$ with expectile ($\alpha$ = 0.99) and quantile ($\alpha$ = 0.99) regression respectively. We report the mean and standard deviation of the normalized score for five seeds.
}}
\center

\label{tab:DT}
\begin{tabular}{cccc}
\toprule
\textbf{Dataset}                   & \textbf{DT-$\algname$-Expectile} & \textbf{DT-$\algname$-Quantile} & \textbf{DT}            \\ \midrule
halfcheetah-medium       &          43.23$\pm$0.26                &   43.21$\pm$0.09                   & 42.6$\pm$0.1     \\
halfcheetah-medium-replay &           38.17$\pm$1.10             &      37.17$\pm$1.68	                 &  36.6$\pm$0.8    \\
halfcheetah-medium-expert &         88.03$\pm$2.14                &    88.56$\pm$1.99	                    & 86.8$\pm$1.3     \\ \midrule
hopper-medium             &        68.95$\pm$11.70          &   	        70.24$\pm$11.80     &  67.6$\pm$1.0   \\
hopper-medium-replay      &        83.21$\pm$4.53                 &      82.86$\pm$5.93                  &  82.7$\pm$7.0    \\
hopper-medium-expert      &        104.13$\pm$3.45          &        105.50$\pm$2.53       & 107.6$\pm$1.8    \\ \midrule
walker2d-medium           &           82.88$\pm$1.70               &       81.50$\pm$1.37                 &  74$\pm$1.4     \\
walker2d-medium-replay    &           70.03$\pm$3.23              &       69.69$\pm$4.15                 &  66.6$\pm$3.0    \\
walker2d-medium-expert    &        109.59$\pm$0.66                 &  109.09$\pm$0.83                      &  108.1$\pm$0.2   \\ \midrule
Total                   &        {\bf 688.22}                 &          687.83             &  672.6          \\ \bottomrule
\end{tabular}
\end{table}

\begin{table}[t!]
\caption{{Normalized score on D4RL Gym for QT, DP-$\algname$ with expectile ($\alpha$ = 0.99) and quantile ($\alpha$ = 0.99) regression respectively. We report the mean and standard deviation of the normalized score for five seeds.
}}
\center
\label{tab: QT}
\begin{tabular}{cccc}
\toprule
\textbf{Dataset}                & \textbf{DP-$\algname$-Expectile} & \textbf{DP-$\algname$-Quantile} & \textbf{QT}            \\ \midrule
halfcheetah-medium          & 50.71$\pm$0.11          & 51.11$\pm$0.32         & 51.4$\pm$0.4  \\
halfcheetah-medium-replay   & 48.40$\pm$0.48          & 48.42$\pm$0.30         & 48.9$\pm$0.3  \\
halfcheetah-medium-expert   & 82.86$\pm$4.99          & 83.78$\pm$7.11         & 96.1$\pm$0.2  \\ \midrule
hopper-medium               & 74.58$\pm$13.68         & 71.01$\pm$5.56         & 96.9$\pm$3.1  \\
hopper-medium-replay        & 98.92$\pm$0.43          & 98.45$\pm$0.93         & 102$\pm$0.2   \\
hopper-medium-expert       & 112.31$\pm$0.61         & 112.73$\pm$0.28        & 113.4$\pm$0.4 \\ \midrule
walker2d-medium              & 87.82$\pm$0.27          & 89.50$\pm$4.94         & 88.8$\pm$0.5  \\
walker2d-medium-replay      & 97.37$\pm$2.41          & 98.11$\pm$1.26         & 98.5$\pm$1.1  \\
walker2d-medium-expert      & 110.09$\pm$0.51         & 111.45$\pm$1.21        & 112.6$\pm$0.6 \\ \midrule
Total                   & 763.11                 & 764.55                  & {\bf 808.2}          \\ \bottomrule
\end{tabular}
\end{table}

{\bf $\algname$ incorporating dynamic programming} We extend our framework to hybrid methods that incorporate dynamic programming components. In particular, we implement a new method, DP-R2CSL, which integrates the policy learning module from QT \citep{hu2024q}. QT remains, to the best of our knowledge, the state-of-the-art on D4RL benchmarks. This module leverages a pre-learned Q-value to regularize the cross-entropy loss used in policy estimation. 
Experiment results are shown in  \Cref{tab: QT}.
We observe that DP-R$^2$CSL significantly outperforms DT-R$^2$CSL, and performs comparably to QT across all settings except hopper-medium and halfcheetah-medium-expert. This is expected, as QT shares a key feature with our RCSL framework—namely, the use of an optimal conditioning function. Specifically, QT selects the return-to-go (RTG) that maximizes the Q-value as its conditioning input (see Section 3.3 of \cite{hu2024q} for details), which aligns with our principle of selecting the in-distribution optimal RTG. 

\begin{table*}[t!]
\center
\caption{{Normalized score on D4RL AntMaze for RvS and $\algname$ with expectile ($\alpha=0.99$) and quantile ($\alpha=0.99$) regression respectively. The inference of RvS is conditioned on three target RTGs. We report the mean and standard deviation of the normalized score for five seeds. }}\resizebox{1\textwidth}{!}
{
\label{tab:reusls_fixed_goal_0.99_additional}
\begin{tabular}{cccccc}
\toprule
\multirow{2}{*}{\textbf{Dataset}} & \multicolumn{3}{c}{\textbf{RVS}}                                            & \multirow{2}{*}{\textbf{$\algname$-Expectile}} & \multirow{2}{*}{\textbf{$\algname$-Quantile}} \\ \cmidrule(lr){2-4}
                                  & \multicolumn{1}{c}{0.7} & \multicolumn{1}{c}{0.9} & \multicolumn{1}{c}{1.1} &                                 &                                \\ \midrule
umaze               &  54.5$\pm$5.83
      & 57.2$\pm$8.64

                   &  60.2$\pm$10.26

                  &  61.4$\pm$5.97

                        &  64.8$\pm$3.82

                     \\
umaze-diverse         &  
54.6$\pm$6.71
               &  56.5$\pm$4.65
            &  53.7$\pm$3.95

          & 57.9$\pm$2.99

                      &  57.8$\pm$3.19

                    \\
medium-play      &  
 0.1$\pm$0.27
               &  0.1$\pm$0.22

              & 0.1$\pm$0.22

        &  0.3$\pm$0.27

                     &  0.4$\pm$0.42

                     \\ 
medium-diverse                        &  0.2$\pm$0.27 
                 &  0.2$\pm$0.27
              &  0.2$\pm$0.27
           &   0.2$\pm$0.27
                        &  0.5$\pm$0.35
                     \\ \midrule

Total             &  109.4	
               &  	114.0
             &  114.2
           &  119.8
                        &  {\bf 123.5}
\\
\bottomrule
\end{tabular}}
\end{table*}

\subsubsection{D4RL AntMaze}
In this section, we present the experiment results on AntMaze. 

\paragraph{$\algname$ with RvS.} We implement \Cref{alg-RCSL: deterministic env} with expectile regression and quantile regression
based on the RvS. 
Experiment results are shown in \Cref{tab:reusls_fixed_goal_0.99_additional}. We can conclude that $\algname$ outperforms the RvS framework across all fixed target RTG fraction ratios. 

\paragraph{$\algname$ with DT.} Experiment results of the DT-extension of our methods, DT-$\algname$, are shown in \Cref{tab:DT-antmaze}. We conclude that DT-$\algname$ outperforms vanilla DT across all settings.

\paragraph{$\algname$ incorporating dyamic programming.} Experiment results of DP-$\algname$, which is a hybrid method incorporating dynamic programming components, are shown in \Cref{tab:QT-antmaze}. We conclude that DP-$\algname$ significantly improves DT-$\algname$. DP-$\algname$ outperforms QT on the umaze environment, but is slightly worse on umaze-diverse and medium diverse environments. This is well expected as QT shares a key feature with our reinforced RCSL framework—namely, the use of an optimal conditioning function.

\begin{table}[t!]
\caption{{Normalized score on D4RL Antmaze for DT, QT, DT-$\algname$ and DP-$\algname$ with expectile ($\alpha$ = 0.99) and quantile ($\alpha$ = 0.99) regression respectively. We report the mean and standard deviation of the normalized score for five seeds.}}
\center
\label{tab:DT-antmaze}
\begin{tabular}{cccc}
\toprule
\textbf{Dataset}        & \textbf{DT-$\algname$-Expectile} & \textbf{DT-$\algname$-Quantile} & \textbf{DT}  \\ \midrule
umaze          & 72.8$\pm$3.90           & 71.4$\pm$3.51          & 59.2   \\
umaze-diverse  & 63.2$\pm$6.06           & 62.4$\pm$4.10          & 53   \\
medium-play    & 2.6$\pm$1.14            & 1.4$\pm$0.55           & 0     \\
medium-diverse & 2.4$\pm$1.52            & 3.4$\pm$1.14           & 0         \\ \midrule
Total          & {\bf 141.0}                   & 138.6                  & 112.2   \\ \bottomrule
\end{tabular}
\end{table}

\begin{table}[t!]
\caption{{Normalized score on D4RL Antmaze for QT, DP-$\algname$ with expectile ($\alpha$ = 0.99) and quantile ($\alpha$ = 0.99) regression respectively. We report the mean and standard deviation of the normalized score for five seeds.}}
\center
\label{tab:QT-antmaze}
\begin{tabular}{cccc}
\toprule
\textbf{Dataset} & \textbf{DP-$\algname$-Expectile} & \textbf{DP-$\algname$-Quantile} & \textbf{QT} \\ \midrule
umaze          &       97.8$\pm$3.03                  &      97.4$\pm$2.88                  &  96.7$\pm$4.7  \\
umaze-diverse  &       84.6$\pm$5.08                  &       88.2$\pm$8.14                 &  96.7$\pm$4.7  \\
medium-diverse &          50.2$\pm$7.08               &           48.4$\pm$4.72             &  59.3$\pm$0.9  \\ \midrule
Total          &           232.6              &            234            &   {\bf 252.7} \\ \bottomrule
\end{tabular}
\end{table}

\section{Multi-Step vs. Single-Step Stitching}
\label{sec:multi-step}
We have so far demonstrated that by incorporating a dataset-dependent optimal conditioning function $f^*$, our algorithm, $\algname$, provably converges to the optimal stitched policy $\pi_\beta^{\star}$, which outperforms classical RCSL, as verified by our experiments. However, unlike dynamic programming-based algorithms, which can converge to the optimal policy $\pi^{\star}$ independently of the specific dataset, the relationship between $\pi_\beta^{\star}$ and $\pi^{\star}$ still remains unclear in $\algname$. In this section, we extend the notion of the optimal stitched policy to explore this relationship further.

\noindent\textbf{Multi-step in-distribution optimal RTG.} 
We introduce a multi-step RTG relabeling procedure to enhance the $\algname$ framework based on the in-distribution optimal RTG function $f^{\star}$. This relabeling scheme iteratively predicts the optimal RTG from the current state. For any trajectory  $\tau = (s_1,a_1,g_1,s_2,a_2,g_2,\dots,s_H,a_H,g_H)$
in the feasible set $T_{\beta}$, we relabel the RTGs in a backward fashion for $k \geq 1$ passes.
We define {\it one-pass} of relabeling as the whole procedure of relabeling from the last stage to the first stage.

For the ease of discussion, we denote $\tilde{\tau}^0 = (s_1,a_1,\tilde{g}^0_1,s_2,a_2,\tilde{g}^0_2,\cdots,s_H,a_H,\tilde{g}^0_H) = \tau$, and $\tilde{T}^0_{\beta} = T_{\beta}$. 
Then for any $k\geq1$, suppose we have 
$\tilde{T}^{k-1}_{\beta}$, we define the feasible conditioning function set after $k-1$ passes relabeling as:
\begin{align*}
    \tilde{\cF}^{k-1}_{\beta} &:= \{f_{k-1} : \cS \times [H] \rightarrow \mathbb{R} \mid \forall (s, h) \in \text{dom}(f_{k-1}), \\
    &\quad \exists \, \tilde{\tau}^{k-1} \in \tilde{T}^{k-1}_{\beta} \, \text{s.t.} \, s_h = s \, \text{and} \, f_{k-1}(s, h) = \tilde{g}^{k-1}_h \},
\end{align*}
where $\text{dom}(f_{k-1})$ is the domain of $f_{k-1}$. 
At any stage $h \in [H]$, recall the feasible set of states is $\cS_h^{\beta} := \{s \in \cS \mid d_h^{\beta}(s) > 0\}$. For any feasible state $s \in \cS_h^{\beta}$, the local feasible conditioning function set after one-pass of relabeling is defined as:
\begin{align*}
   \tilde{\cF}_{\beta}^{k-1} (s, h) &:= \{f_{k-1} : \cS \times [H] \rightarrow \mathbb{R}\mid  f_{k-1} \in \tilde{\cF}^{k-1}_{\beta} \text{and} \, (s, h) \in \text{dom}(f_{k-1})\}.
\end{align*}
We then define the optimal conditioning function after $k-1$ passes of relabeling as:
\begin{align*}
    f_{k-1}^{\star}(s, h) := \argmax_{f \in \cF_{\beta}(s, h)} f_{k-1}(s, h),
\end{align*}
where $f_{k-1}^{\star}(s, h)$ represents the in-distribution optimal RTG from $(s, h)$ in the relabeled feasible set.
Given the trajectory set $\tilde{T}^{k-1}_{\beta}$ as well as the correspondingly defined multi-step in-distribution optimal RTG function $f^{\star}_{k-1}(s,a)$, the $k$-th pass of relabeling proceeds as follows.
At the last stage $H$, set $\tilde{g}^{k}_H=\tilde{g}^{k-1}_H$. Starting from stage $H-1$, the return-to-go $\tilde{g}^{k-1}_h$ is recursively replaced as follows
\begin{align}
    \tilde{g}^{k}_h = \max\{r_h + f_{k-1}^{\star}(s_{h+1}, h+1), r_h + \tilde{g}^{k}_{h+1}\}.\label{def:multi}
\end{align}
The trajectory after $k$ passes of relabeling is denoted as $\tilde{\tau}^{k} = (s_1,a_1,\tilde{g}^{k}_1,s_2,a_2,\tilde{g}^{k}_2,\cdots,s_H,a_H,\tilde{g}^{k}_H)$, and the accordingly updated feasible set is denoted as $\tilde{T}^{k}_{\beta}$. Finally we define the multi-step $\algname$ policy, $\tilde{\pi}_{\beta}^{k,\star}$, corresponding to the $k$ passes relabeling process as
\begin{align}
\label{eq: multi-step policy}
    \tilde{\pi}_{\beta}^{k,\star} := \tilde{P}^k_{\beta} (\cdot|s,h,f^{\star}_k(s,h)),
\end{align}
where $\tilde{P}^k_{\beta}$ is the distribution on $\tilde{T}^k_{\beta}$
induced by the behavior policy $\beta$ and $k$ passes relabeling.

We compare our key relabeling step \eqref{def:multi} with dynamic programming. At first glance, \eqref{def:multi} resembles the classical Bellman-type update, where for any state $s$ and action $a$, the optimal Q-function satisfies:  
\[
Q_h^{\star}(s,a) = r_h(s,a) + \mathbb{E}_{s'} V_{h+1}^{\star}(s'),
\]
which involves summing the immediate reward and the expected future return. However, a key distinction is that dynamic programming requires $V_{h+1}^{\star}$ to be the optimal value function, which is not directly obtainable unless we iteratively apply the Bellman equation to $Q_{h+1}^{\star}$ down to the final stage $H$. In contrast, our approach in \eqref{def:multi} relies solely on an achievable quantity, $f^{\star}_{k-1}$, which is retained from the $(k-1)$-th relabeling. This distinction makes our relabeling method a natural extension of RCSL towards dynamic programming-based algorithms.

\noindent\textbf{Theoretical guarantee.} We have the following theorem that suggests the multi-step relabeling scheme endows the $\algname$ with the capability of  `deep stitching': with a sufficient number of relabeling passes, $\algname$ utilizing return-to-go relabeling is guaranteed to achieve the optimal policy. To see this, let $\Pi_\beta = \{\pi|\forall (s,h)\in\cS\times[H], \pi(\cdot|s,h) \ll \beta(\cdot|s,h)\}$\footnote{For two distributions $P$ and $Q$, $P \ll Q$ means $P$ is absolutely continuous w.r.t. Q.} be the set of policies that are covered by the behavior policy $\beta$, defined the optimal value function covered by $\beta$ as $V_1^{\star,\beta}(s) = \max_{\pi\in\Pi_{\beta}}V_1^{\pi}(s)$. Then we have the following theorem.

\begin{theorem}
\label{th:best stitching}
Under deterministic environments, we have $J(\tilde{\pi}_{\beta}^{H-1,\star}) = \EE_{s\sim\rho}V_1^{\star,\beta}(s)$, where $\tilde{\pi}_{\beta}^{H-1,\star}$ is the $H-1$ step $\algname$ policy defined in \eqref{eq: multi-step policy} with $k=H-1$.
\end{theorem}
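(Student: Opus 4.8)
The plan is to reduce the theorem to the single identity $f^\star_{H-1}(s,h)=V^{\star,\beta}_h(s)$ for every $(s,h)$, and then read off the policy value. Since the environment is deterministic and the horizon is finite, $V^{\star,\beta}_h$ obeys the $\beta$-restricted Bellman recursion $V^{\star,\beta}_h(s)=\max_{a\in\mathrm{supp}\,\beta(\cdot\mid s,h)}\{r_h(s,a)+V^{\star,\beta}_{h+1}(s')\}$, with $V^{\star,\beta}_{H+1}\equiv 0$ and $s'$ the deterministic successor of $(s,a)$; this coincides with $\max_{\pi\in\Pi_\beta}V^\pi_h(s)$ because the greedy-in-support policy lies in $\Pi_\beta$. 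I would prove the identity by a two-sided bound on the relabeled quantities $\tilde g^k_h$ and $f^\star_k$.

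First I would show the uniform upper bound $\tilde g^k_h\le V^{\star,\beta}_{h}(s_h)$ for every feasible trajectory, every pass $k$, and every stage $h$, which immediately gives $f^\star_k(s,h)\le V^{\star,\beta}_h(s)$. The intuition is that each relabeled return-to-go is the return of some $\beta$-feasible (stitched) continuation, so it cannot exceed the best such continuation. Formally this is a double induction: an outer induction on $k$ (whose hypothesis supplies $f^\star_{k-1}(s,h)\le V^{\star,\beta}_h(s)$) and, within each pass, a backward induction on $h$ from $H$ to $1$. In the recursion \eqref{def:multi}, both arguments of the max have the form $r_h(s,a)+(\text{a quantity}\le V^{\star,\beta}_{h+1}(s'))$---the first via the outer hypothesis on $f^\star_{k-1}$, the second via the inner hypothesis on $\tilde g^k_{h+1}$---so each is bounded by $V^{\star,\beta}_h(s)$ via the Bellman recursion.

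Second I would prove the matching lower bound $f^\star_k(s,h)\ge V^{\star,\beta}_h(s)$ for all $h\ge H-k$, by induction on $k$. Dropping the second term in \eqref{def:multi} gives, for any action $a\in\mathrm{supp}\,\beta(\cdot\mid s,h)$ realized by some trajectory through $(s,h)$, the bound $f^\star_k(s,h)\ge r_h(s,a)+f^\star_{k-1}(s',h+1)$. When $h\ge H-k$ we have $h+1\ge H-(k-1)$, so the induction hypothesis yields $f^\star_{k-1}(s',h+1)=V^{\star,\beta}_{h+1}(s')$; maximizing over $a$ then recovers $V^{\star,\beta}_h(s)$ by the Bellman recursion. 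Combined with the upper bound this gives $f^\star_k(s,h)=V^{\star,\beta}_h(s)$ for $h\ge H-k$, and taking $k=H-1$ yields the identity at every stage $h\in[H]$. (The base cases use that stage-$H$ relabeling is inert, so $f^\star_k(s,H)=V^{\star,\beta}_H(s)$.)

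Finally I would characterize the conditioned distribution and evaluate the policy. For the final pass $k=H-1$, the previous step gives $f^\star_{H-2}(s',h+1)=V^{\star,\beta}_{h+1}(s')$, so in \eqref{def:multi} the first argument of the max, $r_h(s,a)+V^{\star,\beta}_{h+1}(s')$, dominates the second (which is at most the same by the upper bound); hence for any trajectory taking $a$ at $(s,h)$ the relabeled value collapses to exactly $\tilde g^{H-1}_h=r_h(s,a)+V^{\star,\beta}_{h+1}(s')$. This equals $f^\star_{H-1}(s,h)=V^{\star,\beta}_h(s)$ precisely when $a$ is $\beta$-optimal, so conditioning on $f^\star_{H-1}(s,h)$ places all mass of $\tilde P^{H-1}_\beta(\cdot\mid s,h,\cdot)$ on $\beta$-optimal actions (and such a trajectory exists, so the conditional is well defined). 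Executing $\tilde\pi^{H-1,\star}_\beta$ from $s_1\sim\rho$ then deterministically traces a $\beta$-optimal trajectory---each visited state is $\beta$-reachable at its stage, so the recursion applies throughout---with return $V^{\star,\beta}_1(s_1)$; taking expectations over $\rho$ gives the claim. I expect the main obstacle to be this last step: pinning down the support of the conditional $\tilde P^{H-1}_\beta(\cdot\mid s,h,g)$ and verifying that the clean collapse $\tilde g^{H-1}_h=r_h(s,a)+V^{\star,\beta}_{h+1}(s')$ cleanly separates $\beta$-optimal from suboptimal actions, so that neither ties nor stochasticity in $\beta$ spoil the argument, is the delicate part; the two induction arguments are otherwise routine given the Bellman recursion.
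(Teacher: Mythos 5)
Your proof is correct, and it lands on the same two structural facts that drive the paper's own proof: after $H-1$ passes every relabeled return collapses to $\tilde g^{H-1}_h = r_h(s_h,a_h) + V^{\star,\beta}_{h+1}(s_{h+1}) = Q^{\star,\beta}_h(s_h,a_h)$, and the conditioning function $f^{\star}_{H-1}(s,h) = \max_{a:\,\beta(a|s)>0} Q^{\star,\beta}_h(s,a) = V^{\star,\beta}_h(s)$ then acts as the maximization operator that selects $\beta$-optimal actions, so the conditioned policy traces a $\beta$-optimal trajectory. The difference is in how this endpoint is reached. The paper runs an induction on the \emph{semantic content} of the intermediate labels: it argues, largely narratively by walking through stages $H-1$, $H-2$, $H-3$ of the first pass, that after $k$ passes $\tilde g^k_h$ equals the return of the best continuation from $(s_h,a_h)$ using at most $k$ stitches, so that $H-1$ passes realize full dynamic programming. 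You avoid characterizing the intermediate labels exactly and instead sandwich them: an everywhere upper bound $\tilde g^k_h \le V^{\star,\beta}_h(s_h)$ (outer induction on $k$, inner backward induction on $h$), and a lower bound $f^{\star}_k(s,h)\ge V^{\star,\beta}_h(s)$ that propagates one stage per pass (valid for $h \ge H-k$), with equality at $k=H-1$ on all stages. Your per-pass invariant is weaker but fully rigorous and shorter; the paper's invariant is stronger---it explains what every intermediate label means, which is what supports its discussion of ``shallow stitching'' and the remark that the guarantee improves monotonically in $k$---but it is stated informally. Your final step, pinning down the support of the conditional $\tilde P^{H-1}_\beta(\cdot \mid s,h,f^{\star}_{H-1}(s,h))$ and checking that ties among $\beta$-optimal actions and stochasticity of $\beta$ are harmless, is also handled more explicitly than in the paper, which simply asserts that the conditioning function ``identifies the best action.''
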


\Cref{th:best stitching} establishes that after $k = H-1$ relabeling passes, $\algname$ recovers the optimal stitched trajectory, akin to dynamic programming-based methods such as CQL \citep{kumar2020conservative}. Moreover, it implies that the relabeling process enhances the worst-case performance of $\algname$. Since $\algname$ is guaranteed to recover at least the best $k$-step stitched trajectory from the initial state, increasing $k$ further strengthens this guarantee.

\section{Conclusion}
\label{sec:conclusion}
We explore methods to provably enhance RCSL for effective trajectory stitching in offline datasets. To this end, we introduce $\algname$, which leverages a in-distribution optimal RTG quantity. We show that $\algname$ can learn the in-distribution optimal stitched policy, surpassing the best policy achievable by standard RCSL. Furthermore, we provide a theoretical analysis of $\algname$ and its variants.
Comprehensive experiment results demonstrate the effectiveness of the $\algname$ framework.

A notable limitation of the RCSL-type algorithms is that they can fail in stochastic environments. Specifically, \Cref{thm:RCSL with general function approximation} shows that the R2CSL algorithm can effectively recover the underlying objective policy $\pi_{\beta}^{\star}$, which is defined by the stochastic environment and the behavior policy. Prior works \citep{eysenbach2022imitating, brandfonbrener2022does} suggest that this objective policy $\pi_{\beta}^{\star}$ can be arbitrarily suboptimal, and we note that this is a  fundamental limitation of RCSL-style algorithms. It remains an open problem to theoretically address this limitation based on the RCSL framework.

\bibliography{reference}
\bibliographystyle{plainnat}

\newpage
\appendix

\section{Proof of Theorems}
In this section, we provide proofs of the theorems in the main text.
\subsection{Proof of \Cref{thm:superiority of the optimakl stitched policy}}
By Corollary 2 of \cite{brandfonbrener2022does}, we have $J(\pi_f^{\text{RCSL}}) = E_{s\sim\rho}[f(s,1)], \forall f\in\cF^{\text{Cst}}_{\beta}$. By the definition of $f$, we know that $f(s,1)$ is the return-to-go of a trajectory $\tau$ starting with $s_1=s$. Then $E_{s\sim\rho}[f(s,1)]$ is the weighted average of trajectories corresponding to $f$. In order to show $J(\pi^{\star})\geq J(\pi_f^{\text{RCSL}})$, we argue in the following that for any $s\in\cS$, $V_1^{\pi^{\star}}(s)$ corresponds to the weighted average of return-to-go of a set of stitched trajectories with $s_1=s$. And the stitched trajectories are no worse than any other trajectory in $T_{\beta}$ in terms of the cumulative reward (initial return-to-go). Thus we have 
    \begin{align*}
        J(\pi^{\star}) = \EE_{s\sim\rho}V_1^{\pi^{\star}}(s)\geq \EE_{s\sim\rho}V_1^{\pi^{\text{RCSL}}_f}(s) =J(\pi_f^{\text{RCSL}}), \forall f\in\cF^{\text{Cst}}_{\beta}.
    \end{align*}

    Denote $\tilde{f}=\argmax_{f\in\cF^{\text{Cst}}_{\beta}}f(s,1)$,
    we only need to show that the return-to-go of the  stitched trajectory is no worse than $\tilde{f}(s,1)$. In particular, $\tilde{f}$ corresponds to a set of trajectories with the biggest cumulative reward (initial return-to-go) starting with $s_1=s$. We denote this set as $\tilde{T}_1(s)$. At stage $h=1$, $\pi^{\star}$ would choose action according to the distribution of the first action $a_1$ in $\tilde{T}_1(s)$. The action $a_1^{\star}$ chosen by $\pi^{\star}$ at the first stage would result in a subset $\tilde{T}_2(s_2(s,a_1^{\star}))$ containing sub-trajectories (trajectory starting from the middle stage) that starts with $s_2(s,a_1^{\star})$ (the state transitioned from $s$ by taking $a_1^{\star}$ at the first stage), with the same return-to-go $\tilde{f}(s,1)-r(s_1,a_1^{\star})$.

    Starting from the second stage, $\pi^{\star}$ starts stitching the performance of different trajectories. Simply denoting $s_2(s,a_1^{\star})$ as $s_2^{\star}$, then we recall that 
    \begin{align*}
        f^{\star}(s_2^{\star},2)=\argmax_{f\in\cF_{\beta}(s_2^{\star},2)}f(s_2^{\star},2).
    \end{align*}
    By the equivalence of the conditioning function and the set of trajectories, it basically means that we choose another subset of trajectories $T_2^{\star}(s_2^{\star})$, starting from $s_2^{\star}$ but with better return-to-go compared to $\tilde{T}_s(s_2^{\star})$, to stitch. And then $\pi^{\star}$ chooses action according to the distribution of $a_2$ in $\tilde{T}_s(s_2^{\star})$. So on and so forth, it is trivial that the trajectory induced by $\pi^{\star}$ is better than the trajectory in $\tilde{T}_1(s)$. Thus, we complete the proof.

\subsection{Proof of \Cref{thm:finite sample guarantee for reinforced RCSL - partial coverage}}

Under the \Cref{assumption: regular,assumption: data1}, we known that when the sample size is large enough, the maximum returns will be included in the dataset with high probability, and thus $\hat{f}^{\star}(s_h,h) = f^{\star}(s_h,h)$.
In particular, we have
\begin{align*}
P\big(s,h,g_h=f^{\star}(s,h) \big)\geq d_{\min}^{\beta}\cdot\tilde{c}.
\end{align*}
Then for any $(s_h,h)$, we want
\begin{align*}
    P\big(\forall k\in T_{\cD}(s_h), g_h^k\neq f^{\star}(s_h,h) \big) \leq \big(1-d_{\min}^{\beta}\cdot\tilde{c}\big)^N \leq \frac{\delta}{SH},
\end{align*}
and this leads to 
\begin{align*}
    N\geq \frac{\log\frac{SH}{\delta}}{\log(1-d_{\min}^\beta\cdot \tilde{c})}.
\end{align*}
Then by union bound, when $N>\log(SH/\delta)/\log(1-d_{\min}^\beta\cdot \tilde{c})$, with probability at least $1-\delta$, the following event holds
\begin{align*}
    \cE =\{\forall (s_h,h), \exists k\in T_{\cD}(s_h),~s.t.~g_h^k= f^{\star}(s_h,h)\}.
\end{align*}
Under the event $\cE$, we have $\hat{f}^{\star}(s_h,h) = f^{\star}(s_h,h)$.

Second, we bound the suboptimality the event $\cE$. By the definition of the $J(\pi)$, we have
    \begin{align*}
        J(\pi_\beta^{\star})  - J(\hat{\pi}_{\cD}^{\star})  &= H\Big[\EE_{P}^{\pi_\beta^{\star}}[r(s,a)] - \EE_P^{\hat{\pi}_{\cD}^{\star}}[r(s,a)] \Big]\leq H\big \|d^{\star,\beta} - d^{\star,\cD}\big\|_1,
    \end{align*}
    where $d^{\star,\beta}$ and $d^{\star,\cD}$ are the occupancy measures on state induced by $\pi_\beta^{\star}$, and  $\hat{\pi}_{\cD}^{\star}$.
    By \Cref{lemma:recursive formula}, we have
    \begin{align*}
        &J(\pi_\beta^{\star})  - J(\hat{\pi}_{\cD}^{\star})\\
        & \leq 2H\sum_{h=1}^H \EE_{s\sim d_h^{\star,\beta}} \big[\text{TV}\big(\pi_\beta^{\star}(\cdot|s,h)||\hat{\pi}_{\cD}^{\star}(\cdot|s,h)\big)\big]\\
        & = 2H \sum_{h=1}^H \EE_{s\sim d_h^{\star,\beta}}\big[\text{TV}\big(P_\beta(\cdot|s,h,f^{\star}(s,h))|| \hat{\pi}(\cdot|s,h,\hat{f}^{\star}(s,h))\big) \big]\\
        & = 2H \sum_{h=1}^H \EE_{s\sim d_h^{\star,\beta}}\big[\text{TV}\big(P_\beta(\cdot|s,h,f^{\star}(s,h))|| \hat{\pi}(\cdot|s,h,f^{\star}(s,h))\big) \big]\\
        & = 2H \sum_{h=1}^H \EE_{s\sim d_h^{\star,\beta}}\Big[\int_a\big|P_{\beta}(a|s,h,f^{\star}(s,h)) - \hat{\pi}(\cdot|s,h,f^{\star}(s,h)) \big| \Big]\\
        & =2H \sum_{h=1}^H \EE_{s\sim d_h^{\star,\beta}}\Big[\frac{P_{\beta}(f^{\star}(s,h)|s,h)}{P_{\beta}(f^{\star}(s,h)|s,h)}\int_a\big|P_{\beta}(da|s,h,f^{\star}(s,h)) - \hat{\pi}(da|s,h,f^{\star}(s,h)) \big| \Big]\\
        & =2H \sum_{h=1}^H \EE_{s\sim d_h^{\star,\beta}}\Big[\frac{P_{\beta}(f^{\star}(s,h)|s,h)}{P_{\beta}(f^{\star}(s,h)|s,h)}\int_a\big|P_{\beta}(da|s,h,f^{\star}(s,h)) - \hat{\pi}(da|s,h,f^{\star}(s,h)) \big| \Big]\\
        & \leq \frac{c^{\star}_{\beta}H}{\tilde{c}}\sum_{h=1}^H \EE_{s\sim d_h^{\beta}}\Big[{P_{\beta}(f^{\star}(s,h)|s,h)}\int_a\big|P_{\beta}(da|s,h,f^{\star}(s,h)) - \hat{\pi}(da|s,h,f^{\star}(s,h)) \big| \Big]\\
        &\leq \frac{c^{\star}_{\beta}H}{\tilde{c}}\sum_{h=1}^H \EE_{s\sim d_h^{\beta}}\Big[\int_g P_{\beta}(dg|s,h)\int_a\big|P_{\beta}(da|s,h,g) - \hat{\pi}(da|s,h,g) \big| \Big]\\
        & = \frac{2c^{\star}_{\beta}H}{\tilde{c}}\sum_{h=1}^H\EE_{s\sim d_h^{\beta},g\sim P_\beta|s,h}\big[\text{TV}\big(P_{\beta}(\cdot|s,h,g) || \hat{\pi}(\cdot|s,h,g) \big) \big]\\
        & \leq \frac{c^{\star}_{\beta}H^2}{\tilde{c}}\sqrt{2L(\hat{\pi})},
    \end{align*}
    where the second equation holds under the event $\hat{f}^{\star}(s_h,h) = f^{\star}(s_h,h)$, the second inequality holds by assumption (1), and 
    the last step follows from the Pinsker's inequality. Next, for any  $(\pi)\in \Pi$, we write $L(\pi) = \bar{L}(\pi)-H_{\beta}$,  where $H_{\beta} = -\EE[\log P_{\beta}(a|s,h,g)]$ and $\bar{L}(\pi) = -\EE[\log \pi(a|s,h,g)]$. Denoting $\pi^{\dagger}\in\argmin_{\pi\in\Pi}L(\pi)$, we have
    \begin{align*}
        L(\hat{\pi}) = L(\hat{\pi}) - L(\pi^{\dagger}) + L(\pi^{\dagger}) \leq \bar{L}(\hat{\pi})-\bar{L}(\pi^{\dagger}) + \delta_{\text{approx}}.
    \end{align*}
    Denote $\hat{L}$  as the empirical cross-entropy loss that is minimized by $\hat{\pi}$,  we have
    \begin{align*}
       \bar{L}(\hat{\pi}) - \bar{L}(\pi^{\dagger}) &=   \bar{L}(\hat{\pi})  - \hat{L}(\hat{\pi}) +  \hat{L}(\hat{\pi}) -\hat{L}(\pi^{\dagger}) + \hat{L}(\pi^{\dagger}) - \bar{L}(\pi^{\dagger})\\
       &\leq 2 \sup_{\pi\in\Pi}|\bar{L}(\pi) - \hat{L}(\pi)|.
    \end{align*}
    Under \Cref{assumption: regular}, we bound this using McDiarmid's inequality and a union bound. This completes the proof.

\subsection{Proof of \Cref{thm:RCSL with general function approximation}}

    By the definition of the $J(\pi)$, we have
    \begin{align*}
        J(\pi_\beta^{\star})  - J(\hat{\pi}_{\cD}^{\star})  &= H\Big[\EE_{P}^{\pi_\beta^{\star}}[r(s,a)] - \EE_P^{\hat{\pi}_{\cD}^{\star}}[r(s,a)] \Big]\leq H\big \|d^{\star,\beta} - d^{\star,\cD}\big\|_1,
    \end{align*}
    where $d^{\star,\beta}$ and $d^{\star,\cD}$ are the occupancy measures on state induced by $\pi_\beta^{\star}$, and  $\hat{\pi}_{\cD}^{\star}$.
    By \Cref{lemma:recursive formula}, we have
    \begin{align*}
        &J(\pi_\beta^{\star})  - J(\hat{\pi}_{\cD}^{\star})\\
        & \leq 2H\sum_{h=1}^H \EE_{s\sim d_h^{\star,\beta}} \big[\text{TV}\big(\pi_\beta^{\star}(\cdot|s,h)||\hat{\pi}_{\cD}^{\star}(\cdot|s,h)\big)\big]\\
        & \leq  {2c^{\star}_{\beta}H} \sum_{h=1}^H \EE_{s\sim d_h^{\beta}}\big[\text{TV}\big(P_\beta(\cdot|s,h,f^{\star}(s,h))|| \hat{\pi}(\cdot|s,h,\hat{f}^{\star}(s,h))\big) \big]\\
        & \leq  {2c^{\star}_{\beta}H} \sum_{h=1}^H \EE_{s\sim d_h^{\beta}}\big[\text{TV}\big(P_\beta(\cdot|s,h,f^{\star}(s,h))|| \hat{\pi}(\cdot|s,h,f^{\star}(s,h))\big)\\
        &\qquad + \text{TV}\big(\hat{\pi}(\cdot|s,h,f^{\star}(s,h))|| \hat{\pi}(\cdot|s,h,\hat{f}^{\star}(s,h))\big)  \big]\\
        & \leq  {2c^{\star}_{\beta}H} \sum_{h=1}^H \EE_{s\sim d_h^{\beta}}\big[\text{TV}\big(P_\beta(\cdot|s,h,f^{\star}(s,h))|| \hat{\pi}(\cdot|s,h,f^{\star}(s,h))\big) + \gamma\big|f^{\star}(s,h) - \hat{f}^{\star}(s,h)\big| \big]\\
        &\leq {2c^{\star}_{\beta}H} \sum_{h=1}^H \EE_{s\sim d_h^{\beta}}\big[\text{TV}\big(P_\beta(\cdot|s,h,f^{\star}(s,h))|| \hat{\pi}(\cdot|s,h,f^{\star}(s,h))\big) \big] \\
        &\qquad + {2c^{\star}_{\beta}H}\sum_{h=1}^H\gamma\sqrt{\EE_{s\sim d_h^{\beta}}\big(f^{\star}(s,h) - \hat{f}^{\star}(s,h)\big)^2}\\
        & \leq {2c^{\star}_{\beta}H}\sum_{h=1}^H \EE_{s\sim d_h^{\beta}}\big[\text{TV}\big(P_\beta(\cdot|s,h,f^{\star}(s,h))|| \hat{\pi}(\cdot|s,h,f^{\star}(s,h))\big) \big] + {2c^{\star}_{\beta}H^2\gamma}\sqrt{\text{Err}(N,\delta)}\\
        & \leq {2c^{\star}_{\beta}H} \sum_{h=1}^H \EE_{s\sim d_h^{\beta}}\Big[\int_a\big|P_{\beta}(a|s,h,f^{\star}(s,h)) - \hat{\pi}(\cdot|s,h,f^{\star}(s,h)) \big| \Big] + {2c^{\star}_{\beta}H^2\gamma}\sqrt{\text{Err}(N,\delta)}\\
        & ={2c^{\star}_{\beta}H} \sum_{h=1}^H \EE_{s\sim d_h^{\beta}}\Big[\frac{P_{\beta}(f^{\star}(s,h)|s,h)}{P_{\beta}(f^{\star}(s,h)|s,h)}\int_a\big|P_{\beta}(da|s,h,f^{\star}(s,h)) - \hat{\pi}(da|s,h,f^{\star}(s,h)) \big| \Big] \\
        &\qquad + {2c^{\star}_{\beta}H^2\gamma}\sqrt{\text{Err}(N,\delta)}\\
        & \leq \frac{c^{\star}_{\beta}H}{\tilde{c}}\sum_{h=1}^H \EE_{s\sim d_h^{\beta}}\Big[{P_{\beta}(f^{\star}(s,h)|s,h)}\int_a\big|P_{\beta}(da|s,h,f^{\star}(s,h)) - \hat{\pi}(da|s,h,f^{\star}(s,h)) \big| \Big] \\
        & \qquad + {2c^{\star}_{\beta}H^2\gamma}\sqrt{\text{Err}(N,\delta)}\\
        &\leq \frac{c^{\star}_{\beta}H}{\tilde{c}}\sum_{h=1}^H \EE_{s\sim d_h^{\beta}}\Big[\int_g P_{\beta}(dg|s,h)\int_a\big|P_{\beta}(da|s,h,g) - \hat{\pi}(da|s,h,g) \big| \Big] + {2c^{\star}_{\beta}H^2\gamma}\sqrt{\text{Err}(N,\delta)}\\
        & = \frac{2c^{\star}_{\beta}H}{\tilde{c}}\sum_{h=1}^H\EE_{s\sim d_h^{\beta},g\sim P_\beta|s,h}\big[\text{TV}\big(P_{\beta}(\cdot|s,h,g) || \hat{\pi}(\cdot|s,h,g) \big) \big] + {2c^{\star}_{\beta}H^2\gamma}\sqrt{\text{Err}(N,\delta)}\\
        & \leq \frac{c^{\star}_{\beta}H^2}{\tilde{c}}\sqrt{2L(\hat{\pi})} + {2c^{\star}_{\beta}H^2\gamma}\sqrt{\text{Err}(N,\delta)},
    \end{align*}
    where the last step follows from the Pinsker's inequality. 
    Next, for any  $(\pi)\in \Pi$, we write $L(\pi) = \bar{L}(\pi)-H_{\beta}$,  where $H_{\beta} = -\EE[\log P_{\beta}(a|s,h,g)]$ and $\bar{L}(\pi) = -\EE[\log \pi(a|s,h,g)]$. Denoting $\pi^{\dagger}\in\argmin_{\pi\in\Pi}L(\pi)$, we have
    \begin{align*}
        L(\hat{\pi}) = L(\hat{\pi}) - L(\pi^{\dagger}) + L(\pi^{\dagger}) \leq \bar{L}(\hat{\pi})-\bar{L}(\pi^{\dagger}) + \delta_{\text{approx}}.
    \end{align*}
    Denote $\hat{L}$  as the empirical cross-entropy loss that is minimized by $\hat{\pi}$,  we have
    \begin{align*}
       \bar{L}(\hat{\pi}) - \bar{L}(\pi^{\dagger}) &=   \bar{L}(\hat{\pi})  - \hat{L}(\hat{\pi}) +  \hat{L}(\hat{\pi}) -\hat{L}(\pi^{\dagger}) + \hat{L}(\pi^{\dagger}) - \bar{L}(\pi^{\dagger})\\
       &\leq 2 \sup_{\pi\in\Pi}|\bar{L}(\pi) - \hat{L}(\pi)|.
    \end{align*}
    Under \Cref{assumption: regular}, we bound this using McDiarmid's inequality and a union bound. This completes the proof.

\subsection{Proof of Hard Instances for $\algname$ with Expectile Regression}
\label{sec: hard instances for expectile regression}

We prove our claim that expectile regression leads to out-of-distribution returns as presented in \Cref{sec:practical}. We use the following toy example:
\begin{align*}
    &h=1 \quad ~h=2\quad ~h=3\\
    a^1 \quad &70(80) \quad 0(10) \quad ~10(10)\\
    a^2 \quad &65(81) \quad 15(16) \quad 1(1)\\
    a^3 \quad &40(75) \quad 20(35) \quad 15(15)
\end{align*}
In this toy example, $\cA=\{a^1,a^2,a^3\}$, $H=3$ and $\cS = \{s\}$.
The state is unique and remains unchanged across stages. At each stage, there are three actions that can be chosen. Each row represents a trajectory induced by implementing the action listed at the beginning. The number outside (inside) the parentheses are rewards (return-to-go). For RCSL, we can only simply choose $f$ to be 80, 81 or 75 as the conditioning return at the initial stage, and the `optimal' RCSL policy would choose $a^2$ at each stages. Although there are only three suboptimal trajectories, these suboptimal trajectories collectively cover a better trajectory. Let's see: at the first stage we choose $a^2$, at the second stage we choose $a^3$ and at the final stage we choose $a^3$. This leads to a trajectory 
\begin{align*}
   \overset{h=1}{a^2: 65(100)}  \rightarrow \overset{h=2}{a^3:20(35)} \rightarrow \overset{h=3}{a^3:15(15)}.
\end{align*}
This policy can ideally be inferred by the reinforced RCSL: during the inference process, at each stage, we set the conditioning function as the largest return-to-go corresponding to that stage. Specifically, at the first stage, the largest return-to-go, 81, comes from the second trajectory, thus we set $f_1=81$ and $\pi^{\text{RCSL}}(f_1,h=1)=a^2$; at the second stage, the largest return-to-go, 35, comes from the third trajectory, thus we set $f_2=35$ and $\pi^{\text{RCSL}}(f_2,h=2)=a^3$; at the third stage, the largest return-to-go, 15, comes from the third trajectory, thus we set $f_3=15$ and $\pi^{\text{RCSL}}(f_3,h=3)=a^3$. We highlight that in this toy example, the trajectories implicitly have overlap since the underlying state is unique and remains unchanged, and the dependence of the RCSL policy on the state is also omitted.

Consider policy learning, we set the behavior policy as the uniform distribution on the action space, $\beta = \text{Uniform}(\cA)$. 
Note that in the tabular MDP, the expectile regression is conducted at each state-action pair. Next, let's focus on the second stage. The return-to-go candidates are $\{10, 16, 35\}$, and our goal is to find the in-distribution optimal RTG, which is 35 at stage $h=2$ coming from the third trajectory
\begin{align}
\label{eq:the third trajectory in the toy example}
   \overset{h=1}{a^3: 40(75)}  \rightarrow \overset{h=2}{a^3:20(35)} \rightarrow \overset{h=3}{a^3:15(15)}.
\end{align}
However, due to the fact that the expectile regression uses $L_2$ loss, as long as the offline dataset includes a trajectory other than \eqref{eq:the third trajectory in the toy example}, the expectile regression with $\alpha<1$ would return a value less than 35 and may not being among the candidates $\{10, 16, 35\}$ (and thus being out-of-distribution). Then \Cref{alg-RCSL: deterministic env} with expectile regression fails at the second stage.  
On the other hand, if all trajectories in the offline dataset is the trajectory \eqref{eq:the third trajectory in the toy example}, then the offline dataset does not contain any information about the second trajectory, which is a necessary component of the optimal stitched trajectory. In this case, \Cref{alg-RCSL: deterministic env} with expectile regression will fail in the first stage. In conclusion, \Cref{alg-RCSL: deterministic env} with expectile regression will definitely fail in the first or the second stage. This completes the proof.

\subsection{Proof of \Cref{thm:theoretical guarantee for reinforced RCSL with quantile regression}}

    We only need to show that with probability at least $1-2\delta$, for any $(s,h)$, we have $P(X_h(f^{\star}(s,h)) \geq N_h^2\cdot\tilde{c}/2)$. Then setting $\alpha > 1-\tilde{c}/2$, the $\alpha$-quantile is exactly $f^{\star}(s,h)$.

    By the Hoeffding inequality and the assumption that $P_{\beta}(g_h=f^{\star}(s,h)|s_h=s)\geq \tilde{c}$, we have
    \begin{align*}
        P(X_h(f^{\star}(s,h))\leq N_h^s\cdot \tilde{c} - t)\leq P(X_h(f^{\star}(s,h))\leq P_{\beta}(g_h=g|s_h=s) - t)\leq \exp(-2t^2/N_h^s).
    \end{align*}
    Let $t=\tilde{c}N_h^s/2$, we have
    \begin{align*}
        P(X_h(f^{\star}(s,h))\leq N_h^s\cdot \tilde{c}/2)\leq \exp(-\tilde{c}^2N_h^s/2)\leq \frac{\delta}{2SH}.
    \end{align*}
    This leads to $N_h^s\geq 2\log(2SH/\delta)/\tilde{c}^2$.
    Note that $N_h^s$ is also a random variable, next we study the condition under which $N_h^s\geq 2\log(2SH/\delta)/\tilde{c}^2$ holds with high probability.
    In particular, we have
    \begin{align*}
        P(N_h^s \leq N\cdot d_{\min}^{\beta} - t)\leq P(N_h^s\leq N\cdot d_h^{\beta}(s) - t)\leq \exp(-2t^2/N).
    \end{align*}
    Let $t=N\cdot d_{\min}^{\beta}/2$ and make the above inequality be less than $\delta/2SH$, we derive that when 
    \begin{align*}
        N\geq \frac{2}{d_{\min}^{\beta,2}}\log\frac{2SH}{\delta},
    \end{align*}
    we have 
    \begin{align*}
        P(N_h^s\geq N\cdot d_{\min}^{\beta}/2)\geq 1-\delta/2.
    \end{align*}
    Moreover, we want $N\cdot d_{\min}^{\beta}/2 \geq 2\log(2SH/\delta)/\tilde{c}^2$, which leads to
    \begin{align*}
        N\geq\frac{4}{\tilde{c}^2d_{\min}^{\beta}}\log\frac{2SH}{\delta},
    \end{align*}
    thus we have 
    \begin{align*}
        P\Big(N_h^s\geq  2\log(2SH/\delta)/\tilde{c}^2\Big) \geq P\Big(N_h^s\geq  \frac{N\cdot d_{\min}^{\beta}}{2}\Big)\geq 1-\frac{\delta}{2}.
    \end{align*}
    By union bound, when 
    \begin{align*}
    N\geq \max\Big\{\frac{2}{d_{\min}^{\beta,2}}\log\frac{2SH}{\delta}, \frac{4}{\tilde{c}^2d_{\min}^\beta}\log\frac{2SH}{\delta}\Big\},
\end{align*}
with probability at least $1-\delta$, we have $P(X_h(f^{\star}(s,h)) \geq N_h^2\cdot\tilde{c}/2)$. This completes the proof.

\subsection{Proof of \Cref{th:best stitching}}
\label{sec: proof of th:best stitching}

     Define the optimal Q-function covered by the behavior policy $\beta$ as $Q_h^{\star,\beta}(s,a) = \max_{\pi\in\Pi_{\beta}}Q_h^{\pi}(s,a)$. By bellman optimality equation, we know that $V_h^{\star,\beta}(s) = \max_{a\in\cA, \beta(a|s)>0}Q_h^{\star,\beta}(s,a)$.
     We prove \Cref{th:best stitching} by showing that the new label $\tilde{g}^{H-1}_h$ is $Q_h^{\star,\beta}(s_h,a_h)$, and the conditioning function $\tilde{f}^{\star}_{k}$ serves as the maximum operator in defining the value function.

    To show this, we first answer the following easier question: 
    \begin{center}
        {\it After one-pass of the relabeling procedure, what does the new RTG in the trajectory mean? }
    \end{center}   
    Equivalently, we would like to figure out what does the return-conditioned policy based on modified data aim for? With the original trajectory $\tau=(s_1,a_1,g_1,\cdots, s_H,a_H,g_H)$, next we delve into the relabeling process. Starting from the last stage $H$, we have $\tilde{g}^1_H=g_H$, which is the reward obtained by adopting $a_H$ at $s_H$. Then the trajectory becomes $\tau=(s_1,a_1,g_1,\cdots, s_H,a_H,\tilde{g}^1_H)$. 
    \paragraph{Stage $H-1$ in the first pass  relabeling.}
    Moving one stage backward, we perform some real relabeling at stage $H-1$.
    \begin{align*}
        \tilde{g}_{H-1} = \max\{\underbrace{r_{H-1} + f^{\star}(s_H, H)}_{\text{I}}, \underbrace{r_{H-1} + \tilde{g}^1_H}_{\text{II}}\}.
    \end{align*}
    From now on, at each stage, we answer the following two questions to find common patterns. 
    \begin{align}
        &\text{Q1: what are the term I and term II?}\label{eq:multi-step-question1}\\
        &\text{Q2: what does the maximization operation mean?}\label{eq:multi-step-question2}
    \end{align}   
    We first focus on Q1. The relabeling consists of two parts. Term I: the current reward + the in-distribution optimal RTG in the original data, and term II: the original RTG $g_{H-1}$. 
    In both term I and term II, there are two parts: the current reward and the future cumulative reward. In term I, the future cumulative reward is the maximum possible RTG at $s_H$ achieved by $\beta$. So term I represents the cumulative reward we can get at $s_{H-1}$ if we take $a_{H-1}$ at the current stage and then take the action associated to $f^{\star}(s_H,H)$. Term II represents the cumulative reward we can get at $s_{H-1}$ if we take $a_{H-1}$ at the current stage and then take original action in the trajectory $\tau$, which is $a_H$. This answers Q1 in \eqref{eq:multi-step-question1}. We then proceed to answer Q2. Note that at stage $H-1$, there are two cases can happen: (i) $\text{term I} = \text{term II}$, and (ii) $\text{term I} > \text{term II}$. 
    
    For case (i), the tie means that $a_H$ is the action that achieves the maximum RTG, which is the maximum reward at $s_H$. 
    
    For case (ii), term I $>$ term II means that there is a better action, which is associated with $f^{\star}(s_H,H)$, and we better follow that action in at $s_H$.
    Thus, it is clear that at stage $H-1$, if case (ii) happens, we actually would perform {\it one-step stitching} (because the action associated with $f^{\star}(s_H,H)$ differs from $a_H$, in order words it comes from other trajectories instead of $\tau$ itself), and the stitching technically happens in the next stage $H$. This answers Q2 in \eqref{eq:multi-step-question2}. 

    After relabeling $g_{H-1}$, we have 
    \begin{align*}
        \tau = \{s_1,a_1,g_1, \cdots, s_{H-2},a_{H-2}, g_{H-2},s_{H-1},a_{H-1},\tilde{g}^1_{H-1}, s_H,a_H,\tilde{g}^1_H\}.
    \end{align*}
    Before we move on, let's stop for a while to consider a question: What does $\tilde{g}_{H-1}$ represent? Based on the answer to Q1 and Q2 above, we can tell that $\tilde{g}_{H-1}$ represents the return-to-go we can get if we follow the action in the trajectory $a_{H-1}$ at $s_{H-1}$ and then take the optimal action supported by $\beta$ at $s_H$, i.e., the optimal RTG we can get after following $a_{H-1}$.

\paragraph{Stage $H-2$ in the first pass relabeling.} 
    One stage backward, let's consider the stage $H-2$
    \begin{align*}
        \tilde{g}_{H-2} = \max\{\underbrace{r_{H-2} + f^{\star}(s_{H-1}, H-1)}_{\text{I}},\underbrace{r_{H-2}+\tilde{g}^1_{H-1}}_{\text{II}} \}.
    \end{align*}
    Term I is the current reward plus maximum RTG at $s_{H-1}$ achieved by $\beta$. Term II is the current reward plus optimal RTG after following $a_{H-1}$ at $s_{H-1}$.  As for the max-operator, the comparison between term I and term II is to check if there is a better action for stage $H-1$ at $s_{H-1}$. 
    
    To see this, we analyze the following three possible cases: (i) $\text{term I} = \text{term II}$, (ii) $\text{term I} > \text{term II}$, and (iii) $\text{term I} < \text{term II}$. 
    \subparagraph{Case (i):}  Tie means that following action $a_{H-1}$ at $s_{H-1}$ would be fine.
    \subparagraph{Case (ii):}  There is a better choice of action than $a_{H-1}$ which leads to a larger cumulative reward (in-distribution optimal RTG at $s_{H-1}$).
    \subparagraph{Case (iii):} We better follow $a_{H-1}$ at $s_{H-1}$ because after checking the RTGs of all possible trajectories with $s_{H-1}$ induced by $\beta$, no one is larger than $\tilde{g}_{H-1}$, which is the cumulative reward of a sub-trajectory starting with $(s_{H-1},a_{H-1})$ and possibly involving one-time stitching at $s_H$.

    To sum up, the maximization operation actually gives us a chance to figure out if the feasible set suggests a better choice of action at $s_{H-1}$, which could lead to a new trajectory. Note that the term `better' is in the sense that it leads to a in-distribution RTG that is larger than the cumulative reward of a sub-trajectory starting with $(s_{H-1},a_{H-1})$ and possibly involving one-time stitching at $s_H$. We highlight that the in-distribution RTG is the RTG corresponding to a sequence of actions in some original trajectory involved in $T_{\beta}$, which does not involve stitching.
    If (ii) happens, choosing term I as the relabeled RTG means that we perform one-step stitching at $s_{H-1}$ by adopting an action other than the original action $a_{H-1}$ in the trajectory, and forget about the sub-trajectory starting with $(s_{H-1},a_{H-1})$ and possibly involving one-time stitching at $s_H$. It will become clear later that the relabeling process is effectively performing a form of shallow stitching/planning, as the new label only remembers an one-time stitching.

\paragraph{Stage $H-3$ in the first pass relabeling.}
    Next, we focus on $\tilde{g}^1_{H-3}$, where 
     \begin{align*}
        \tilde{g}_{H-3} = \max\{\underbrace{r_{H-3} + f^{\star}(s_{H-2}, a_{H-2})}_{\text{I}}, \underbrace{r_{H-3}+\tilde{g}^1_{H-2}}_{\text{II}}\}.
    \end{align*}
    At $s_{H-3}$, we follow $a_{H-3}$. Then at the next stage $H-2$ we either follow the action associated to $f^{\star}(s_{H-2}, H-2)$, or follow the original action $a_{H-2}$, which could provide a better future stitching. So choosing term I/term II basically means stitching at the next stage or stitching maybe in the further future (after the next stage). 
    
    \paragraph{A conclusion of one-pass relabeling.}
    According to the analysis above, the new label $\tilde{g}_h$ is the cumulative reward achieved by following $a_h$ at the current stage and then following the best one-time stitching trajectory afterwards. One pass of the relabeling process incorporates information about the future best one-time stitching into the new RTG label.    
    
    Starting from $\tilde{\tau} = (s_1,a_1,\tilde{g}^1_1,\cdots,s_H, a_H,\tilde{g}^1_H)$, the second pass of the relabeling process endows the label $\tilde{g}_h^2$ information about the future best two-time stitching trajectory. This is exactly the dynamic programming in deterministic environments, and after $H-1$ passes, we have  $\tilde{g}^{H-1}_h = Q^{\star,\beta}_h(s_h,a_h)$.

Lastly, the conditioning function $\tilde{f}^{\star}_k$ is defined based on $\tilde{\tau}^{H-1}$. By definition, $\tilde{f}^{\star}_k$ is the in-distribution optimal RTG label
    \begin{align*}
        \tilde{f}^{\star}_k(s_h,h)=
        \max_{a\in\cA, \beta(a|s_h)>0}\tilde{g}^{H-1}_h(s_h,a) = \max_{a\in\cA, \beta(a|s_h)>0}Q^{\star,\beta}_h(s_h,a_h),
    \end{align*}
    which identifies the best action that achieves the optimal value function at $s_h$.
    Thus, the reinforced RCSL policy $\tilde{\pi}_{\beta}^{H-1,\star}$
    recovers the optimal policy. This completes the proof.

\section{The Auxiliary Lemmas}
\begin{lemma}[Lemma 1 of \cite{brandfonbrener2022does}]
\label{lemma:recursive formula}
Let $d^\pi$ refer to the marginal distribution of $P^\pi$ over states only. For any two policies $\pi$ and $\pi'$, we have 
\begin{align*}
    \|d^\pi - d^{\pi'}\|_1\leq 2\sum_{h=1}^H\EE_{s\sim d_h^\pi}\big[\text{TV}\big(\pi(\cdot|s,h)||\pi'(\cdot|s,h)\big)\big].
\end{align*}
\end{lemma}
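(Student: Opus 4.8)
The plan is to establish a one-step recursion that propagates the discrepancy between the two state distributions forward in time, and then to unroll it. First I would fix the convention that $d^\pi$, the state marginal of the trajectory law $P^\pi$, is a convex combination of the per-step marginals $d_h^\pi$ (concretely, the uniform average over $h \in [H]$ induced by drawing a uniformly random step). By the triangle inequality $\|d^\pi - d^{\pi'}\|_1 \leq \sum_h w_h\,\epsilon_h$ with $\sum_h w_h = 1$, where $\epsilon_h := \|d_h^\pi - d_h^{\pi'}\|_1$, so it suffices to bound each per-step discrepancy $\epsilon_h$ uniformly and then average.

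The core step is the recursion. Writing the forward dynamics $d_{h+1}^\pi(s') = \sum_{s,a} d_h^\pi(s)\,\pi(a\mid s,h)\,P(s'\mid s,a)$ and the analogue for $\pi'$, I would subtract them and insert the cross term $d_h^\pi(s)\,\pi'(a\mid s,h)$, producing
\[
d_h^\pi(s)\pi(a\mid s,h) - d_h^{\pi'}(s)\pi'(a\mid s,h) = d_h^\pi(s)\big[\pi(a\mid s,h)-\pi'(a\mid s,h)\big] + \big[d_h^\pi(s)-d_h^{\pi'}(s)\big]\pi'(a\mid s,h).
\]
Summing against $P(s'\mid s,a)$ over $s'$ (the kernel sums to one) and over $a$, then applying the triangle inequality, the second term collapses to $\sum_s |d_h^\pi(s)-d_h^{\pi'}(s)|\sum_a \pi'(a\mid s,h) = \epsilon_h$, while the first becomes $\sum_s d_h^\pi(s)\sum_a|\pi(a\mid s,h)-\pi'(a\mid s,h)| = 2\,\EE_{s\sim d_h^\pi}\big[\text{TV}(\pi(\cdot\mid s,h)\|\pi'(\cdot\mid s,h))\big]$. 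This yields the recursion $\epsilon_{h+1} \leq \epsilon_h + 2\,\EE_{s\sim d_h^\pi}\big[\text{TV}(\pi(\cdot\mid s,h)\|\pi'(\cdot\mid s,h))\big]$, with base case $\epsilon_1 = 0$ since both trajectories start from the shared initial distribution $\rho$.

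Unrolling the recursion gives $\epsilon_h \leq 2\sum_{t=1}^{h-1}\EE_{s\sim d_t^\pi}[\text{TV}(\cdot\|\cdot)] \leq 2\sum_{t=1}^{H}\EE_{s\sim d_t^\pi}[\text{TV}(\cdot\|\cdot)]$ for every $h$. Since this upper bound is independent of $h$, averaging over $h$ with the convex weights $w_h$ from the first step preserves it, so $\|d^\pi - d^{\pi'}\|_1 \leq 2\sum_{h=1}^{H}\EE_{s\sim d_h^\pi}[\text{TV}(\pi(\cdot\mid s,h)\|\pi'(\cdot\mid s,h))]$, which is the claim.

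The main obstacle I anticipate is the bookkeeping in the decomposition: inserting $d_h^\pi(s)\,\pi'(a\mid s,h)$ (rather than $d_h^{\pi'}(s)\,\pi(a\mid s,h)$) is precisely what pins the expectation in the TV term to $d_h^\pi$ as stated, since this leaves $\pi'$ (not $\pi$) multiplying the occupancy gap so that it integrates out cleanly; the alternative insertion yields the same structural bound but with the TV weighted by $d_h^{\pi'}$. The remaining manipulations—summing out the transition kernel and the normalized policy, and identifying the $\ell_1$ policy gap with $2\,\text{TV}$—are routine.
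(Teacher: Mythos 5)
Your proof is correct and follows essentially the same route as the paper's: decompose $d^\pi$ as the uniform average of per-step marginals, bound each $\|d_h^\pi - d_h^{\pi'}\|_1$ by the unrolled one-step recursion, and average. The only difference is that the paper imports the recursion $\|d_h^\pi - d_h^{\pi'}\|_1 \leq \sum_{j<h}\delta_j$ directly from equation (36) of \cite{brandfonbrener2022does}, whereas you derive it from scratch via the cross-term decomposition of the forward dynamics, making your argument self-contained.
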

\begin{proof}
    The proof largely follows that of Lemma 1 in \cite{brandfonbrener2022does}. By (36) in the poof of Lemma 1 in \cite{brandfonbrener2022does}, we have 
    \begin{align*}
        \|d^\pi - d^{\pi'}\|_1 &\leq \frac{1}{H}\sum_{h=1}^H\Delta_h \leq \frac{1}{H}\sum_{h=1}^H\sum_{j=1}^{h-1}\delta_j \leq H\frac{1}{H}\sum_{h=1}^H\delta_h = 2\sum_{h=1}^H\EE_{s\sim d_h^\pi}\big[\text{TV}\big(\pi(\cdot|s,h)||\pi'(\cdot|s,h)\big) \big],
    \end{align*}
    where $\Delta_h =\|d_h^{\pi}-d_h^{\pi'}\|_1$ and $\delta_h=2\EE_{s\sim d_h^{\pi}}\big[\text{TV}(\pi(\cdot|s, h)||\hat{\pi}'(\cdot|s, h) \big]$.
\end{proof}

\section{Experiments Details}
\label{sec: Experiments Details}

D4RL \citep{fu2020d4rl} is an offline RL benchmark which provide the pre-collected dataset such as Gym-MuJoCo, AntMaze, and Kitchen. In this work, we evaluate our work in the Gym-MuJoCo with the medium, medium-replay, and medium-expert three-level results. For AntMaze, we evaluate our methods on umaze, umaze-diverse, medium-play and medium diverse. All experiments are based on five seeds: 1000, 2000, 3000, 4000 and 5000.

\subsection{Implementation and Hyperparameters}
In this section, we provide the implementation details and hyperparameters of our $\algname$-Expectile and  $\algname$-Quantile. 

\paragraph{Implementation of $\algname$ with RvS} To validate our proposed algorithm, we follow the settings in \cite{emmons2022rvs}. RvS leverages the vanilla RTG as the conditioning function to predict the optimal action.  We keep the training stage as the same as RvS and modify the inference stage. During the inference, we first pre-train a condition function by expectile regression and quantile regression using MLP. Then we use the pre-trained condition function to predict the max-return given the current state in the inference. We use the RTG prediction to guide the action selection in RvS.

We evaluate our proposed $\algname$ method using the RvS framework on the D4RL benchmark \citep{fu2020d4rl}. In the RvS framework, achieving optimal performance during inference requires searching for the target RTGs \citep{emmons2022rvs}. However, this process is often impractical in real-world scenarios. To address this, we select three appropriate target RTG fraction ratios as $0.7,0.9,1.1$ to guide the RvS instead, which means the initial target RTG will be 
\begin{align}
  \text{RTG}_{\text{inital}} = (\text{RTG}_{\max}-\text{RTG}_{\min}) * \text{fraction} + \text{RTG}_{\min},
\end{align}  
where $\text{RTG}_{\max}$ and $\text{RTG}_{\min}$ are the maximum RTG and minimum RTG from the random policy and expert policy in this specific environment respectively. Unlike the RvS approach, our method does not require additional searches for initial target RTGs, as these values are predicted by our pre-trained condition function.

We follow the same implementation as the RvS during the training stage shown in \Cref{tab:hyper_rvs}. We also illustrate our MLP condition function parameters in \Cref{tab:hyper_mlp}.

We run these experiments on the RTX2080Ti for around 8 hours for each setting.

\begin{table}[ht]
\caption{Hyperparameters in RvS.}
\centering
\label{tab:hyper_rvs}
\begin{tabular}{cc}
\toprule
\textbf{Hyperparameter} & \textbf{Value}                                         \\ \midrule
Hidden layers           & 2                                                     \\
Layer width             & 1024                                                              \\
Nonlinearity            & ReLU                                                             \\
Learning rate           & $1 \times 10^{-3}$                                            \\
Epochs                  & 2000                                                  \\
Batch size              & 16384                                                     \\
Dropout                 & 0                                                          \\
Policy output           & Unimodal Gaussian           \\\bottomrule                         
\end{tabular}
\end{table}

\begin{table}[ht]
\caption{Hyperparameters in Condion Function.}
\centering
\label{tab:hyper_mlp}
\begin{tabular}{cc}
\toprule
\textbf{Hyperparameter} & \textbf{Value}                                         \\ \midrule
Hidden layers           & 3                                                     \\
Layer width             & 128                                                              \\
Nonlinearity            & ReLU                                                             \\
Learning rate           & $1 \times 10^{-4}$                                            \\
Epochs                  & 300                                                  \\
Batch size              & 256                                                     \\
Dropout                 & 0.1                                                          \\ \bottomrule       
\end{tabular}
\end{table}

\paragraph{Implementation of DT-$\algname$} We also evaluate our proposed $\algname$ method using the DT \cite{chen2021decision} on the D4RL benchmark \citep{fu2020d4rl}. 
DT utilize the transformer model to learn the action given the past trajectory and the target RTG.
For the implementation, we follow the recently proposed Reinformer \citep{zhuang2024reinformer} utilizing additional action entropy to regularize to have a more stable DT training. 
We formulate the loss function as $L_{DT-\algname}=\mathbb{E}_{\tau}[-\log\pi_{\theta}(\cdot|\tau)-\lambda H(\pi_{\theta}(\cdot|\tau))]$. In the vanilla DT, we also need to give the target RTG during the inference stage. However, similar to the implementation on the RVS framework, our method also does not require additional target RTGs to achieve good performance.

We use the same MLP condition function hyperparameters in \Cref{tab:hyper_mlp}. And we provide the hyperparameters of our DT-$\algname$ in \Cref{tab:hyper_DT}.

We run these experiments on the A5000 for around 3 hours for each setting.

\begin{table}[ht]
\caption{Hyperparameters in  DT-$\algname$.}
\centering
\label{tab:hyper_DT}
\begin{tabular}{cc}
\toprule
\textbf{Hyperparameter}          & \textbf{Value}                                                   \\ \midrule
Number of layers                & 3                                                                \\
Number of attention heads       & 1                                                                \\
Embedding dimension             & 128                                                              \\
Nonlinearity function           & ReLU                                                             \\
Batch size                      & 64                                                               \\
Context length \(K\)            & 20                  \\
Dropout                         & 0.1                                                              \\
Learning rate                   & \(1\times10^{-4}\)                                               \\
Grad norm clip                  & 0.25                                                             \\
Weight decay                    & \(1\times10^{-4}\)                                               \\
Action Entropy $\lambda$        & 0.1
\\ \bottomrule
\end{tabular}
\end{table}

\paragraph{Implementation of DP-$\algname$} We also extend our empirical study into the RCSL with the dynamic programming component. We evaluate our proposed DP-$\algname$ method using the QT framework \cite{hu2024q} on the D4RL benchmark \citep{fu2020d4rl}. QT incorporates the Q value function learning in the training stage and utilizes this function to guide the action selection in the inference stage.
An MLP-based conditioning function is used to model the distribution of RTG values conditioned on the input state. Initial target RTGs are then sampled from this distribution and combined with several given target RTGs during the evaluation process. At each inference step, the action is generated given the predicted RTG and then selected by the critic component which is the Q value function.

The hyperparameters for the MLP-based QT model are provided in \Cref{tab:hyper_QT_mlp}, and the training hyperparameters for DP-$\algname$ are summarized in \Cref{tab:hyper_QT}.

We run these DP-$\algname$ experiments on the A5000 for around 15 hours for each setting.

\begin{table}[ht]
\caption{Hyperparameters in Condion Function in DP-$\algname$.}
\centering
\label{tab:hyper_QT_mlp}
\begin{tabular}{cc}
\toprule
\textbf{Hyperparameter} & \textbf{Value}                                         \\ \midrule
Hidden layers           & 3                                                     \\
Layer width             & 256                                                              \\
Nonlinearity            & ReLU                                                             \\
Learning rate           & $1 \times 10^{-4}$                                            \\
Epochs                  & 300                                                  \\
Batch size              & 256                                                     \\
Dropout                 & 0.1                                                          \\ \bottomrule       
\end{tabular}
\end{table}

\begin{table}[ht]
\caption{Hyperparameters in  DP-$\algname$.}
\centering
\label{tab:hyper_QT}
\begin{tabular}{cc}
\toprule
\textbf{Hyperparameter}          & \textbf{Value}                                                   \\ \midrule
Number of layers                & 4                                                                \\
Number of attention heads       & 4                                                                \\
Embedding dimension             & 256                                                              \\
Nonlinearity function           & ReLU                                                             \\
Batch size                      & 256                                                               \\
Context length \(K\)            & 20                  \\
Dropout                         & 0.1                                                              \\
Learning rate                   & \(3.0\times10^{-4}\)                                               \\\bottomrule
\end{tabular}
\end{table}

\end{document}